\pgfplotsset{compat=newest}
\tikzset{every picture/.style={font issue=\scriptsize},
         font issue/.style={execute at begin picture={#1\selectfont}}
        }
\newcommand{%
    % \tikzsetnextfilename{#1}%
    \input{}%
}[1]{%
    % \tikzsetnextfilename{#1}%
    \input{#1}%
}
\newtheorem{mydef}{Definition}
\newtheorem{corollary}{Corollary}
\newtheorem{lemma}{Lemma}
\DeclareMathOperator{\diag}{diag}
\DeclareMathOperator{\VEC}{vec}
\DeclareMathOperator*{\argmin}{arg\,min}
\newcommand{\R}{\mathbb{R}}
\newcommand{\N}{\mathbb{N}}
\newcommand{\onevec}{\mathbf{1}}
\newcommand{\zerovec}{\mathbf{0}}
\newcommand{\matI}{\mathbf{I}}
\newcommand{\matX}{\mathbf{X}}
\newcommand{\matY}{\mathbf{Y}}
\newcommand{\matQ}{\mathbf{Q}}
\newcommand{\matPhi}{\mathbf{\Phi}}
\newcommand{\matD}{\mathbf{D}}
\newcommand{\matK}{\mathbf{K}}
\newcommand{\matS}{\mathbf{S}}
\newcommand{\matZ}{\mathbf{Z}}
\newcommand{\matE}{\mathbf{E}}
\newcommand{\vecphi}{\boldsymbol{\phi}}
\newcommand{\matA}{\mathbf{A}}
\newcommand{\vecb}{\mathbf{b}}
\newcommand{\vecv}{\mathbf{v}}
\newcommand{\vecalpha}{\boldsymbol\alpha}
\newcommand{\vecomega}{\boldsymbol\omega}
\newcommand{\matAlpha}{\boldsymbol A}
\newcommand{\veca}{\mathbf{a}}
\newcommand{\vecz}{\mathbf{z}}
\newcommand{\vecx}{\mathbf{x}}
\newcommand{\vecy}{\mathbf{y}}
\newcommand{\matC}{\mathbf{C}}
\newcommand{\prox}{\operatorname{prox}}
\newcommand{\proj}{\operatorname{proj}}
\ifcvprfinal\pagestyle{empty}\fi
\begin{document}

\setlength{\abovedisplayskip}{4.5pt}
\setlength{\belowdisplayskip}{4.5pt}
\setlength{\textfloatsep}{2mm}
\setlength{\dbltextfloatsep}{2mm}
\setlength{\dblfloatsep}{0mm}

\title{Linear Shape Deformation Models with Local Support using Graph-based Structured Matrix Factorisation}
\author{
Florian Bernard\textsuperscript{1,2}  $\quad$
Peter Gemmar\textsuperscript{2,3} 
 $\quad$
Frank Hertel\textsuperscript{1} 
 $\quad$
Jorge Goncalves\textsuperscript{2}
$\quad$
Johan Thunberg\textsuperscript{2} 
\\
\textsuperscript{1}Centre Hospitalier de Luxembourg, Luxembourg \\
\textsuperscript{2}Luxembourg Centre for Systems Biomedicine, University of Luxembourg, Luxembourg\\
\textsuperscript{3}Trier University of Applied Sciences, Trier, Germany
}

\makeatletter
\let\@oldmaketitle\@maketitle%
\renewcommand{\@maketitle}{\@oldmaketitle%
  \myfigure{}\bigskip}%
\makeatother

\newcommand\myfigure{%
\centering
    \vspace{-5mm}
    \includegraphics[scale=0.113]{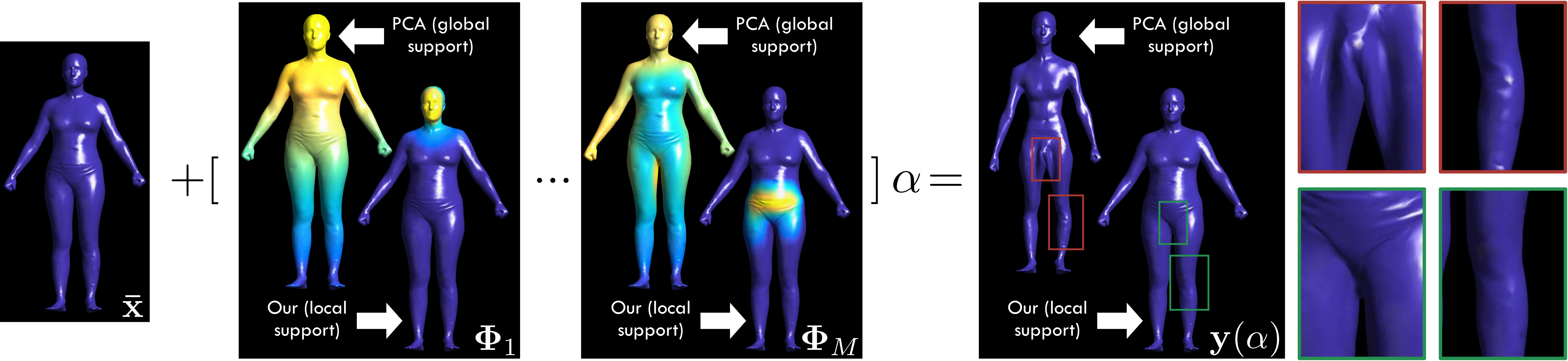}
    \captionof{figure}{Global support factors of PCA lead to implausible body shapes, whereas the local support factors of our method give more realistic results. See our accompanying video for animated results.}
     \label{globvsloc}
}

\maketitle

\begin{abstract}
Representing 3D shape deformations by high-dimensional linear models has many applications in computer vision and medical imaging. Commonly, using Principal Components Analysis a low-dimensional subspace of the high-dimensional shape space is determined. However, the resulting factors (the most dominant eigenvectors of the covariance matrix) have global support, i.e.~changing the coefficient of a single factor deforms the entire shape. Based on matrix factorisation with sparsity and graph-based regularisation terms, we present a method to obtain deformation factors with local support. The benefits include better flexibility and interpretability as well as the possibility of interactively deforming shapes locally. We demonstrate that for brain shapes our method outperforms the state of the art in local support models with respect to generalisation and sparse reconstruction, whereas for body shapes our method gives more realistic deformations.
\end{abstract}

\footnotetext{\copyright ~ 2015 IEEE. Personal use of this material is permitted. Permission from IEEE must be obtained for all other uses, in any current or future media, including reprinting/republishing this material for advertising or promotional purposes, creating new collective works, for resale or redistribution to servers or lists, or reuse of any copyrighted component of this work in other works.}

\section{Introduction}
Due to their simplicity, linear models in high-dimensional space are frequently used for modelling nonlinear deformations of shapes in 2D or 3D space. For example, Active Shape Models (ASM) \cite{Cootes:1992uw}, based on a statistical shape model, are popular for image segmentation.
Usually, surface meshes, comprising faces and vertices, are employed for representing the surfaces of shapes in 3D. Dimensionality reduction techniques are used to learn a low-dimensional representation of the \emph{vertex coordinates} from training data. %
Frequently, an affine subspace close to the training shapes is used. To be more specific, mesh deformations are modelled by expressing the vertex coordinates as the sum of a mean shape $\bar{\vecx}$ and a linear combination of $M$ modes of variation $\matPhi = [\matPhi_1,\ldots,\matPhi_M]$, i.e.~the vertices deformed by the weight or coefficient vector $\vecalpha$ are given by $\vecy(\vecalpha)=\bar{\vecx} + \matPhi \vecalpha$, see Fig.~\ref{globvsloc}.
Commonly, by using Principal Components Analysis (PCA), the modes of variation are set to the most dominant eigenvectors of the sample covariance matrix.
PCA-based models are computationally convenient due to the orthogonality of the eigenvectors of the (real and symmetric) covariance matrix. Due to the diagonalisation of the covariance matrix, an axis-aligned Gaussian distribution of the weight vectors of the training data is obtained.
A problem of PCA-based models is that eigenvectors have \emph{global support}, i.e.~adjusting the weight of a single factor affects \emph{all} vertices of the shape (Fig.~\ref{globvsloc}). 

Thus, in this work, instead of eigenvectors, we consider more general \emph{factors} as modes of variation that have \emph{local support}, i.e.~adjusting the weight of a single factor leads only to a spatially localised deformation of the shape (Fig.~\ref{globvsloc}). The set of all factors can be seen as a dictionary for representing shapes by a linear combination of the factors.
Benefits of factors with local support include more realistic deformations (cf.~Fig.~\ref{globvsloc}), better interpretability of the deformations (e.g. clinical interpretability in a medical context \cite{Sjostrand:2007hb}), and the possibility of interactive local mesh deformations (e.g. editing animated mesh sequences in computer graphics \cite{Neumann:2013gb}, or enhanced flexibility for interactive 3D segmentation based on statistical shape models \cite{Bernard:2015wx,bernard2016arXiv}).
\subsection{PCA Variants}
PCA is a non-convex problem that admits the efficient computation of the global optimum, e.g. by Singular Value Decomposition (SVD). However, the downside is that the incorporation of arbitrary (convex) regularisation terms is not possible due to the SVD-based solution. Therefore, incorporating regularisation terms into PCA is an active field of research and several variants have been presented: Graph-Laplacian PCA \cite{Jiang:2013wd} obtains factors with smoothly varying components according to a given graph. Robust PCA \cite{Candes:2011vf} formulates PCA as a convex low-rank matrix factorisation problem, where the nuclear norm is used as convex relaxation of the matrix rank.
 A combination of both Graph-Laplacian PCA and Robust PCA has been presented in \cite{Shahid:2015vb}. The Sparse PCA (SPCA) method \cite{Hein:2010wf,Zou:2006tl} obtains sparse factors. Structured Sparse PCA (SSPCA) \cite{Jenatton:2009tq} additionally imposes structure on the sparsity of the factors using group sparsity norms, such as the mixed $\ell_1/\ell_2$ norm. %

\subsection{Deformation Model Variants}
In \cite{Last:2011wk}, the flexibility of shape models has been increased by using PCA-based factors in combination with a \emph{per-vertex} weight vector, in contrast to a single weight vector that is used for all vertices. In \cite{Cootes:1995uz,Wang:2000iw}, it is shown that additional elasticity in the PCA-based model can be obtained by manipulation of the sample covariance matrix. Whilst both approaches increase the flexibility of the shape model, they result in global support factors.

In \cite{Sjostrand:2007hb}, SPCA is used to model the anatomical shape variation of the 2D outlines of the corpus callosum. 
In \cite{Uzumcu:2003gz}, 2D images of the cardiac ventricle were used to train an Active Appearance Model based on Independent Component Analysis (ICA) \cite{Hyvarinen:2001vj}.
Other applications of ICA for statistical shape models are presented in \cite{Suinesiaputra:2004jh,Zewail:2007ta}.
The Orthomax method, where the PCA basis is determined first and then rotated such that it has a ``simple'' structure, is used in \cite{Stegmann:2006ja}. 
The major drawback of SPCA, ICA and Orthomax is that the spatial relation between vertices is completely ignored. %

The Key Point Subspace Acceleration method based on Varimax, where a statistical subspace and key points are automatically identified from training data, is introduced in \cite{Meyer:2007wg}.
For mesh animation, in \cite{Tena:2011ts} the clusters of spatially close vertices are determined first by spectral clustering, and then PCA is applied for each vertex cluster, resulting in one sub-PCA model per cluster. This two-stage procedure has the problem, that, due to the independence of both stages, it is unclear whether the clustering is optimal with respect to the deformation model. 
Also, a blending procedure for the individual sub-PCA models is required. %
A similar approach of first manually segmenting body regions and then learning a PCA-based model has been presented in \cite{Yang:2014tj}.

The \emph{Sparse Localised Deformation Components} method (SPLOCS) obtains localised deformation modes from animated mesh sequences by using a matrix factorisation formulation with a weighted $\ell_1/\ell_2$ norm regulariser \cite{Neumann:2013gb}. Local support factors are obtained by explicitly modelling local support regions, which are in turn used to update the weights of the $\ell_1/\ell_2$ norm \emph{in each iteration}. This makes the non-convex optimisation problem even harder to solve and dismisses convergence guarantees.
With that, a suboptimal initialisation of the support regions, as presented in the work, affects the quality of the found solution. 

The \emph{compressed manifold modes} method \cite{Kovnatsky:2015tm,Neumann:2014uy} has the objective to obtain local support basis functions of the (discretised) Laplace-Beltrami operator of a \emph{single} input mesh.
In \cite{Kovnatsky:2014ti}, the authors obtain smooth functional correspondences between shapes that are spatially localised by using an $\ell_1$ norm regulariser in combination with the \emph{row} and \emph{column Dirichlet energy}.
The method proposed in \cite{Rustamov:2013tw} is able to localise \emph{shape differences} based on functional maps between two shapes.
Recently, the \emph{Shape-from-Operator} approach has been presented \cite{Boscaini:2015tl}, where shapes are reconstructed from more general intrinsic operators.

\subsection{Aims and Main Contributions}
The work presented in this paper has the objective of learning local support deformation factors from training data. 
The main application of the resulting shape model is recognition, segmentation and shape interpolation \cite{Bernard:2015wx,bernard2016arXiv}.
Whilst our work remedies several of the mentioned shortcomings of existing methods, it can also be seen as complementary to SPLOCS, which is more tailored towards artistic editing and mesh animation. 
The most significant difference to SPLOCS is that we aim at letting the training shapes \emph{automatically determine the location and size} of each local support region. 
This is achieved by formulating a matrix factorisation problem that incorporates regularisation terms which simultaneously account for \emph{sparsity} and \emph{smoothness} of the factors, where a graph-based smoothness regulariser accounts for smoothly varying neighbour vertices.
In contrast to SPLOCS or sub-PCA, this results in an implicit clustering that is part of the optimisation and does \emph{not require an initialisation} of local support regions, which in turn simplifies the optimisation procedure.
Moreover, by integrating a \emph{smoothness prior} into our framework we can \emph{handle small training datasets}, even if the desired number of factors exceeds the number of training shapes. 
Our optimisation problem is formulated in terms of the Structured Low-Rank Matrix Factorisation framework \cite{Haeffele:2014wj}, which has \emph{appealing theoretical properties}.

\section{Methods}
First, we introduce our notation and linear shape deformation models.
Then, we state the objective and its formulation as optimisation problem, followed by the theoretical motivation. Finally, the block coordinate descent algorithm and the factor splitting method are presented.

\subsection{Notation}
$\matI_p$ denotes the $p \times p$ identity matrix, $\onevec_p$ the $p$-dimensional vector containing ones, $\zerovec_{p\times q}$ the $p \times q$ zero matrix, and $\mathbb{S}^+_{p}$ the set of $p \times p$ positive semi-definite matrices. 
Let $\matA \in \R^{p \times q}$. We use the notation $\matA_{\mathcal{A},\mathcal{B}}$ to denote the submatrix of $\matA$ with the rows indexed by the (ordered) index set $\mathcal{A}$ and columns indexed by the (ordered) index set $\mathcal{B}$. The colon denotes the full (ordered) index set, e.g. $\matA_{\mathcal{A},:}$ is the matrix containing all rows of $\matA$ indexed by $\mathcal{A}$.
 For brevity, we write $\matA_r$ to denote the $p$-dimensional vector formed by the $r$-th column of $\matA$. 
The operator $\VEC(\matA)$ creates a $pq$-dimensional column vector from $\matA$ by stacking its columns, and $\otimes$ denotes the Kronecker product.
 
\subsection{Linear Shape Deformation Models}\label{lindefmod}
Let $\matX_k {\in} \R^{N \times 3}$ be the matrix representation of a shape comprising $N$ points (or vertices) in $3$ dimensions, and let ${\{\matX_k : 1 \leq k \leq K\}}$ be the set of $K$ training shapes. We assume that the rows in each $\matX_k$ correspond to homologous points. Using the vectorisation $\vecx_k {=} \VEC(\matX_k) {\in} \R^{3N}$, all $\vecx_k$ are arranged 
in the matrix ${\matX {=} [\vecx_1, \ldots, \vecx_K] {\in} \R^{3N{\times}K}}$. 
We assume that all shapes have the same pose, are centred at the mean shape $\bar{\matX}$, i.e.~$\sum_k \matX_k {=} \zerovec_{N {\times} 3}$, and that the standard deviation of $\VEC(\matX)$ is one.

Pairwise relations between vertices are modelled by a weighted undirected graph $\mathcal{G} {=} (\mathcal{V}, \mathcal{E}, \vecomega)$ that is shared by all shapes.  
The node set $\mathcal{V} {=}\{1,\ldots,N\}$ enumerates all $N$ vertices, the edge set ${\mathcal{E} {\subseteq} \{1,\ldots, N\}^2}$ represents the connectivity of the vertices, and $\vecomega {\in} \R_+^{\vert \mathcal{E} \vert}$ is the weight vector. The (scalar) weight $\vecomega_e$ of edge $e {=} (i,j) \in \mathcal{E}$ denotes the affinity between vertex $i$ and $j$, where ``close'' vertices have high value $\vecomega_e$.
We assume there are no self-edges, i.e.~$(i,i) {\notin} \mathcal{E}$. 
The graph can either encode pairwise \emph{spatial} connectivity, or affinities that are not of spatial nature (e.g. symmetries, or prior anatomical knowledge in medical applications).
For the standard PCA-based method \cite{Cootes:1992uw}, the modes of variation in the $M$ columns of the matrix $\matPhi {\in} \R^{3N \times M}$ are defined
as the $M$ most dominant eigenvectors of the sample covariance matrix $\matC {=} \frac{1}{K-1}\matX \matX^T$. However, we consider
the generalisation where $\matPhi$ contains general $3N$-dimensional vectors, the \emph{factors}, in its $M$ columns. In both cases, the (linear) deformation model (modulo the mean shape)
is given by
\begin{align}\label{pdmvec}
  \vecy(\vecalpha) = \matPhi \vecalpha \,,
\end{align}
with weight vector $\vecalpha \in \R^{M}$. Due to vectorisation, the rows with indices $\{1,\ldots,N\}, \{N{+}1,\ldots, 2N\}$ and $\{2N{+}1,\ldots,3N\}$ of $\vecy$ (or $\matPhi$), correspond to the $x$, $y$ and $z$ components of the $N$ vertices of the shape, respectively.

\subsection{Objective and Optimisation Problem}
The objective is to find ${\matPhi = [\matPhi_1, \ldots, \matPhi_M]}$ and ${\matAlpha = [\vecalpha_1, \ldots, \vecalpha_K] \in \R^{M \times K}}$ for a given $M < 3N$ such that, according to eq.~\eqref{pdmvec}, we can write 
\begin{align}
  \matX \approx \matPhi \matAlpha \,,
\end{align}
where the factors $\matPhi_m$ have \emph{local support}. Local support means that $\matPhi_m$ is sparse and that all \emph{active} vertices, i.e.~vertices that correspond to the non-zero elements of $\matPhi_m$, are connected by (sequences of) edges in the graph $\mathcal{G}$.

Now we state our problem formally as an optimisation problem. The theoretical motivation thereof is based on \cite{Bach:2008wn,Burer:2005jb,Haeffele:2014wj} and is recapitulated in section \ref{theomot}, where it will also become clear that our chosen regularisation term is related to the \emph{Projective Tensor Norm} \cite{Bach:2008wn,Haeffele:2014wj}. 

A general matrix factorisation problem with squared Frobenius norm loss is given by
\begin{align}
  \min_{\matPhi, \matA} \| \matX - \matPhi\matA \|_F^2 + \Omega(\matPhi,\matA)\,,
\end{align}
where the regulariser $\Omega$ imposes certain properties upon $\matPhi$ and $\matA$. The optimisation is performed over \emph{some} compact set (which we assume implicitly from here on).
An obvious property of local support factors is sparsity. Moreover, it is desirable that neighbour vertices vary smoothly. Both properties together seem to be promising candidates to obtain local support factors, which we reflect in our regulariser.
Our optimisation problem is given by
\begin{align}\label{optProb}
    \min_{{\substack{\matPhi{\in}\R^{{3N{\times}M}}\\ \matA{\in}\R^{M{\times}K}}}}\! \|\matX{-}\matPhi\matA\|_F^2 + 
    {\lambda}\sum_{m=1}^M \!\|\matPhi_m\|_\Phi \|(\matA_{m,:})^T\|_A \,, %
\end{align}
where $\|\cdot\|_\Phi$ and $\|\cdot\|_A$ denote vector norms. %
For $\vecz' \in \R^K, \vecz \in \R^{3N}$, we define
\begin{align}
  & \|\vecz'\|_A = %
  \lambda_{A} \| \vecz' \|_2 \,, \text{ and } \label{normA} \\
  &\|\vecz\|_\Phi = \lambda_{_1}\| \vecz \|_1 {+} 
   \lambda_{_2} \| \vecz \|_2  
   + \lambda_{\infty} \| \vecz \|_{1,\infty}^\mathcal{H} + 
    \lambda_{\mathcal{G}} \| \matE \vecz \|_2  \,.  \label{normPhi}
\end{align}
Both $\ell_2$ norm terms will be discussed in section \ref{theomot}. The $\ell_1$ norm is used to obtain sparsity in the factors. 
The (mixed) $\ell_1/\ell_\infty$ norm is defined by
\begin{align}\label{l1linf}
    \| \vecz \|_{1,\infty}^\mathcal{H} =  \sum_{g \in \mathcal{H}}\| \vecz_g \|_{\infty} \,,
\end{align}
where $\vecz_g$ denotes a subvector of $\vecz$ indexed by $g \in \mathcal{H}$. By using the collection ${\mathcal{H} = \{ \{i, i+N, i+2N\} : 1 \leq i \leq N\}}$, a grouping of the x, y and z components per vertex is achieved, i.e.~within a group $g$ only the component with largest magnitude is penalised and no extra cost is to be paid for the other components being non-zero.

The last term in eq.~\eqref{normPhi}, the graph-based $\ell_2$ (semi-)norm, imposes smoothness upon each factor, such that neighbour elements according to the graph $\mathcal{G}$ vary smoothly. 
Based on the incidence matrix of $\mathcal{G}$, we choose $\matE$ such that
  \begin{align}\label{matE}
      \| \matE \vecz \|_2 {=} \sqrt{\sum_{d \in \{0,N,2N\}} \sum_{(i,j) = e_p \in \mathcal{E}} \vecomega_{e_p} ( \vecz_{d+i} - \vecz_{d+j})^2}\,.
  \end{align}
  As such, $\matE$ is a discrete (weighted) gradient operator and $\|\matE \cdot\|_2^2$ corresponds to Graph-Laplacian regularisation \cite{Jiang:2013wd}. $\matE$ is specified in the supplementary material.

  The structure of our problem formulation in eqs.~\eqref{optProb}, \eqref{normA}, \eqref{normPhi} allows for various degrees-of-freedom in the form of the parameters. They allow to weigh the data term versus the regulariser ($\lambda$), control the rank of the solution ($\lambda_A$ and $\lambda_2$ together, cf.~last paragraph in section \ref{theomot}), control the sparsity ($\lambda_1$), control the amount of grouping of the x, y and z components ($\lambda_\infty$) and control the smoothness $\lambda_{\mathcal{G}}$. The number of factors $M$ has an impact on the size of the support regions (for small $M$ the regions tend to be larger, whereas for large $M$ they tend to be smaller).

\subsection{Theoretical Motivation}\label{theomot}
For a matrix $\matX \in \R^{3N \times K}$ and vector norms $\|\cdot\|_{\Phi}$ and $\|\cdot\|_{ A}$, let us define the function
\begin{align}
   \psi^M(\matX) = \label{ptn}
 \min_{\substack{\{(\matPhi \in \R^{3N{\times}M}, \\\matA\in\R^{M{\times}K}): \\ \matPhi \matA = \matX\}}} ~~\sum_{m=1}^M \|\matPhi_m\|_{\Phi} \| (\matA_{m,:})^T\|_{ A} \,.
\end{align}
The function $\psi(\cdot) {=} \lim_{M \rightarrow \infty} \psi^M(\cdot)$
defines a norm known as \emph{Projective Tensor Norm} or \emph{Decomposition Norm} \cite{Bach:2008wn,Haeffele:2014wj}.
\begin{lemma}
  For any $\epsilon > 0$ there exists an $M(\epsilon) \in \N$ such that $\|\psi(\matX) - \psi^{M(\epsilon)}(\matX)\| < \epsilon$.
\end{lemma}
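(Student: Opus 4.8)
The plan is to derive the statement as a direct consequence of the monotone convergence theorem for real sequences, once I verify that the sequence $\bigl(\psi^M(\matX)\bigr)_{M\in\N}$ is eventually finite, non-increasing, and bounded below. (Here the $\|\cdot\|$ around the scalar difference is just the absolute value, since $\psi$ and $\psi^M$ take real values.) First I would check finiteness: for every $M \ge r := \rank(\matX)$ a factorisation $\matX = \matPhi\matA$ with $\matPhi \in \R^{3N\times M}$ and $\matA \in \R^{M\times K}$ exists (take a rank-$r$ factorisation and pad it with zero columns and rows), and evaluating the objective on this feasible point gives a finite sum of products of vector norms. Hence the feasible set is non-empty and $\psi^M(\matX) < \infty$ for all $M \ge r$.

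Next I would prove monotonicity, $\psi^{M+1}(\matX) \le \psi^M(\matX)$ for every $M \ge r$. Given any feasible pair $(\matPhi,\matA)$ for $\psi^M$, set $\tilde\matPhi = [\,\matPhi,\ \zerovec_{3N\times 1}\,]$ (append a zero column) and let $\tilde\matA$ be $\matA$ augmented by one extra zero row. Then $\tilde\matPhi\tilde\matA = \matPhi\matA = \matX$, so $(\tilde\matPhi,\tilde\matA)$ is feasible for $\psi^{M+1}$, and since the appended rank-one term contributes $\|\zerovec\|_\Phi\,\|\zerovec\|_A = 0$ the two objective values coincide. Minimising over the enlarged feasible set yields the inequality. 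Combined with the trivial lower bound $\psi^M(\matX) \ge 0$ (each summand is a product of norms, hence non-negative), the sequence is non-increasing and bounded below.

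By the monotone convergence theorem the limit $\psi(\matX) = \lim_{M\to\infty}\psi^M(\matX) = \inf_{M}\psi^M(\matX)$ therefore exists and is finite, which is exactly what makes $\psi$ a well-defined quantity. The assertion of the lemma is then precisely the $\epsilon$-characterisation of this convergence: given $\epsilon > 0$, convergence supplies an $M(\epsilon)\in\N$ with $\psi^{M(\epsilon)}(\matX) - \psi(\matX) < \epsilon$, and because the sequence decreases toward its infimum we have $\psi^{M(\epsilon)}(\matX) \ge \psi(\matX)$, so the difference equals its own absolute value and $|\psi(\matX) - \psi^{M(\epsilon)}(\matX)| < \epsilon$ follows.

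The main obstacle is not the limiting argument, which is routine, but the well-posedness checks preceding it: confirming that the constrained minimisation has a non-empty feasible set (so that $\psi^M$ is genuinely finite rather than $+\infty$) and that the minimum is attained over the implicitly assumed compact set, so that $\psi^M$ is a $\min$ and not merely an $\inf$. I would also remark that a stronger fact is available: a Carath\'eodory-type argument in the $3NK$-dimensional matrix space, as in \cite{Bach:2008wn,Haeffele:2014wj}, shows the infimum is already attained at some finite $M^\ast$, so that $\psi^{M^\ast}(\matX) = \psi(\matX)$ exactly, which makes the approximation claim hold trivially for every $\epsilon$; nonetheless the elementary monotonicity argument above is self-contained and suffices.
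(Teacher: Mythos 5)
Your proof is correct, but it follows a genuinely different route from the paper's. The paper fixes an optimal infinite decomposition realising $\psi(\matX) = \sum_{i=1}^{\infty}\|\matPhi_i\|_{\Phi}\|\matA_i^T\|_{A}$ and applies monotone convergence to its partial sums $l_m$, reading the lemma off from $l_m \to \psi(\matX)$. You instead work with the value sequence $(\psi^M(\matX))_{M}$ itself: zero-padding shows it is non-increasing, it is finite once $M \geq \rank(\matX)$ and bounded below by zero, hence it converges to its infimum, which is $\psi(\matX)$ by the paper's own definition $\psi = \lim_{M\to\infty}\psi^M$; the lemma is then just the $\epsilon$-characterisation of that convergence, with the sign of the difference pinned down by monotonicity. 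Your route buys two things the paper's argument leaves implicit or glosses over: first, it does not require the infimum defining $\psi$ to be attained by an actual infinite decomposition, which the paper assumes without justification; second, it sidesteps the fact that truncating an infinite decomposition after $m$ terms is not feasible for $\psi^m$ (the truncated product no longer equals $\matX$), so the paper's partial sums $l_m$ are not values of $\psi^m(\matX)$, and identifying the two sequences is a gap in the paper's proof --- your zero-padding monotonicity step is exactly what closes it. What the paper's version buys is brevity and an explicit decomposition to point at. Your closing remark that a Carath\'eodory-type argument yields exact attainment at some finite $M^{\ast}$ is consistent with the cited literature and would make the claim trivial, but the elementary argument you give is self-contained and, as written, the more rigorous of the two.
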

\begin{proof}
For $\psi(\matX)$ there are sequences $\{\matPhi_i\}_{i=1}^{\infty}$ and $\{\matA^T_i\}_{i = 1}^{\infty}$ such that $\psi(\matX) = \sum_{i = 1}^{\infty}\|\matPhi_i\|_{\Phi}\|\matA^T_i\|_{A}$. 
Let $l_m = \sum_{i = 1}^{m}\|\matPhi_i\|_{\Phi} \|\matA^T_i\|_{A}$. The sequence
$l_m$ is monotone, bounded from above and convergent. Let 
${l_{\infty} = \psi(\matX)}$ denote its limit. Since the sequence is convergent, there is 
$M(\epsilon)$ such that $\|l_{\infty} - l_j\| < \epsilon$ for ${j \geq M(\epsilon)}$.
\end{proof}
\noindent We now proceed by introducing the optimisation problem
\begin{align}\label{zprob}
    \min_\matZ \| \matX - \matZ \|_F^2 + \lambda \psi^M(\matZ) \,.
  \end{align}
Next, we establish a connection between problem \eqref{zprob} and our problem \eqref{optProb}. 
First, we assume that we are given a solution pair $(\matPhi,\matA)$ minimising problem \eqref{optProb}. By defining $\matZ = \matPhi\matA$, $\matZ$ is a solution to problem \eqref{zprob}. 
Secondly, assume we are given a solution $\matZ$ minimising problem \eqref{zprob}. To find a solution solution pair $(\matPhi,\matA)$ minimising problem \eqref{optProb}, one needs to compute the $(\matPhi,\matA)$ that achieves the minimum of the right-hand side of \eqref{ptn} for a given $\matZ$.

This shows that given a solution to one of the problems, one can infer a solution to the other problem. Next we reformulate problem \eqref{zprob}.
Following \cite{Haeffele:2014wj}, we define the matrices $\matQ  \in \R^{3N+K \times M},~\matY \in \R^{3N+K \times 3N+K}$ as
 \begin{align}
    & \matQ = \begin{bmatrix}
      \matPhi \\ \matA^T 
    \end{bmatrix}\,, \quad
    \matY = \matQ\matQ^T = \begin{pmatrix}
       \matPhi\matPhi^T & \matPhi \matA \\
       \matA^T \matPhi^T & \matA^T \matA
     \end{pmatrix}\,,
    \intertext{and the function $F: \mathbb{S}_{3N+K}^+ \rightarrow \R$ as}
    & F(\matY) = F(\matQ\matQ^T) = \| \matX - \matPhi\matA \|_F^2 + \lambda \psi^M(\matPhi\matA)\,.
  \end{align}
  \begin{flalign}\label{optProbCvx}
\text{Let} && \matY^* = \argmin_{\matY \in \mathbb{S}_{3N+K}^+} F(\matY)\,. \quad\qquad\qquad
\end{flalign}
For a given $\matY^*$, problem \eqref{zprob} is minimised by the upper-right block matrix of $\matY^*$.
The difference between \eqref{zprob} and \eqref{optProbCvx} is that the latter is over the set of positive semi-definite matrices, which, at first sight, does not present any gain.
However, under certain conditions, the global solution for $\matQ$, rather than the product $\matY = \matQ\matQ^T$, can be obtained directly \cite{Burer:2005jb}. %
In \cite{Bach:2008wn} it is shown that
if $\matQ$ is a \emph{rank deficient} local minimum of $F(\matQ\matQ^T)$, then it is also a global minimum. Whilst these results only hold for twice differentiable functions $F$, Haeffele et al.\ have presented analogous results for the case of $F$ being a sum of a twice-differentiable term and a non-differentiable term \cite{Haeffele:2014wj}, such as ours above.

As such, \emph{any} (rank deficient) local optimum of problem \eqref{optProb} is also a global optimum.
If in $\psi(\cdot)$, both $\|\cdot\|_{\Phi}$ and $\|\cdot\|_{ A}$ are the $\ell_2$ norm, $\psi(\cdot)$ is equivalent to the nuclear norm,
commonly used as convex relaxation of the matrix rank \cite{Haeffele:2014wj,Recht:2010ht}.
In order to steer the solution towards being rank deficient, we include $\ell_2$ norm terms in $\|\cdot\|_\Phi$ and $\|\cdot\|_A$ (see \eqref{normA} and \eqref{normPhi}). With that, part of the regularisation term in \eqref{optProb} is the nuclear norm that accounts for low-rank solutions. 
 
\subsection{Block Coordinate Descent}
A solution to problem \eqref{optProb} is found by block coordinate descent (BCD) \cite{Xu:2013vk} (algorithm~\ref{bcdCode}). It employs alternating proximal steps, which can be seen as generalisation of gradient steps for non-differentiable functions.
%
%
%
% \vspace{-5mm}
\begin{algorithm}\label{bcdCode}
\scriptsize
\SetKwInput{Input}{Input}
\SetKwInput{Output}{Output}
\DontPrintSemicolon
 \Repeat{convergence}{
   \tcp{update $\matPhi$}
   $\matPhi' \leftarrow \matPhi - \epsilon_\Phi \nabla_\Phi \| \matX - \matPhi \matA \|_F^2$\tcp*{gradient step (loss)}
   \For( \tcp*[h]{proximal step $\matPhi$ (penalty)}){$m=1,\ldots,M$}{
      $\matPhi_m  \leftarrow \prox_{\lambda \|\cdot\|_\Phi \|(\matA_{m,:})^T\|_A}(\matPhi'_m )$
   }
   \BlankLine
   \tcp{update $\matA$}
   $\matA' \leftarrow \matA- \epsilon_A \nabla_A\| \matX - \matPhi \matA \|_F^2$\tcp*{gradient step (loss)}
   
   \For( \tcp*[h]{proximal step $\matA$ (penalty)}){$m=1,\ldots,M$}{%
      $\matA_{m,:}  \leftarrow \prox_{\lambda \|\matPhi_m\|_\Phi \|\cdot\|_A}((\matA^{'}_{m,:})^T)^T$
   }
   }
 \caption{Simplified view of block coordinate descent. For details see \cite{Haeffele:2014wj,Xu:2013vk}.}
\end{algorithm}
% \vspace{-5mm}
 %
%
Since computing the proximal mapping is repeated in each iteration, an efficient computation is essential.
The proximal mapping of $\|\cdot\|_A$ in eq.~\eqref{normA} has a closed-form solution by \emph{block soft thresholding} \cite{Parikh:2013vb}.
The proximal mapping of $\|\cdot\|_\Phi$ in eq.~\eqref{normPhi} is solved by dual forward-backward splitting \cite{Combettes:2010uu,Combettes:2009wd} (see supplementary material).
The benefit of BCD is that it scales well to large problems (cf. complexity analysis in the supplementary material). However, a downside is that by using the alternating updates one has only guaranteed convergence to a Nash equilibrium point \cite{Haeffele:2014wj,Xu:2013vk}. %

\subsection{Factor Splitting}
Whilst solving problem \eqref{optProb} leads to smooth and sparse factors, there is no guarantee that the factors have only a \emph{single} local support region. %
In fact, as motivated in section \ref{theomot}, the solution of eq.~\eqref{optProb} is steered towards being low-rank. However, the price to pay for a low-rank solution is that capturing multiple support regions in a \emph{single} factor is preferred over having each support region in an individual factor (e.g. for $M=2$ and any $\veca \neq \zerovec, ~\vecb \neq \zerovec$, the matrix $\matPhi = [\matPhi_1 \matPhi_2]$ has a lower rank when $\matPhi_1 = [\veca^T ~\vecb^T]^T$ and $\matPhi_2 = \zerovec$, compared to $\matPhi_1 = [\veca^T ~\zerovec^T]^T$ and  $\matPhi_2 = [\zerovec^T~ \vecb^T]^T$).

A simple yet effective way to deal with this issue is to split factors with multiple disconnected support regions into multiple (new) factors (see supplementary material). %
Since this is performed \emph{after} the optimisation problem has been solved, it is preferable over pre- or intra-processing procedures \cite{Tena:2011ts,Neumann:2013gb} since the optimisation remains unaffected and the data term in eq.~\eqref{optProb} does not change due to the splitting. %

\setlength{\abovedisplayskip}{2pt}
\setlength{\belowdisplayskip}{2pt}
\section{Experimental Results}\label{expres}
We compared the proposed method with \emph{PCA} \cite{Cootes:1992uw}, \emph{kernel PCA} (kPCA, cf.~\ref{kernelisation}), \emph{Varimax} \cite{Harman:1976vh}, \emph{ICA} \cite{Hyvarinen:2001vj}, \emph{SPCA} \cite{Jenatton:2009tq}, \emph{SSPCA} \cite{Jenatton:2009tq}, and \emph{SPLOCS} \cite{Neumann:2013gb} on two real datasets, brain structures and human body shapes. Only our method and the SPLOCS method explicitly aim to obtain local support factors, whereas the SPCA and SSPCA methods obtain sparse factors (for the latter the $\ell_1/\ell_2$ norm with groups defined by $\mathcal{H}$, cf.~eq.~\eqref{l1linf}, is used). The methods kPCA, SPCA, SSPCA, SPLOCS and ours require to set various parameters, which we address by random sampling (see supplementary material).

For all evaluated methods a factorisation $\matPhi \matA$ is obtained. W.l.o.g. we normalise the rows of $\matA$ to have standard deviation one (since $\matPhi \matA = (\frac{1}{s}\matPhi) (s\matA)$ for $s\neq0$). Then, the factors in $\matPhi$ are ordered descendingly according to their $\ell_2$ norms. 

In our method, the number of factors may be changed due to factor splitting, thus, in order to allow a fair comparison, we only retain the first $M$ factors.
Initially, the columns of $\matPhi$ are chosen to $M$ (unique) columns selected randomly from $\matI_{3N}$. This is in accordance with \cite{Haeffele:2014wj}, where empirically good results are obtained using trivial initialisations. Convergence plots for different initialisations can be found in the supplementary material.

\subsection{Quantitative Measures}\label{quanmeas}
For $\vecx = \VEC(\matX)$ and $\tilde\vecx = \VEC(\tilde\matX)$, the \emph{average error}
\begin{align}
  e_{\text{avg}}(\vecx,\tilde\vecx) = \frac{1}{N} \sum_{i=1}^N \| \matX_{i,:} {-} \tilde\matX_{i,:}\|_2 
  \intertext{and the \emph{maximum error}}
   e_{\max}(\vecx,\tilde\vecx)  =\max_{i=1,\ldots,N} \| \matX_{i,:} - \tilde\matX_{i,:}\|_2 
\end{align}
measure the agreement between shape $\matX$ and shape $\tilde\matX$. %

The \emph{reconstruction error} for shape $\vecx_k$ is measured by solving the overdetermined system $\matPhi \vecalpha_k = \vecx_k$ for $\vecalpha_k$ in the least-squares sense, and then computing  $e_{\text{avg}}(\vecx_k, \matPhi \vecalpha_k)$ and $e_{\max}(\vecx_k, \matPhi \vecalpha_k)$, respectively.

To measure the \emph{specificity error}, $n_s$ shape samples are drawn randomly ($n_s {=} 1000$ for the brain shapes and $n_s {=} 100$ for the body shapes). For each drawn shape, the average and maximum errors between the closest shape in the training set are denoted by $s_{\text{avg}}$ and $s_{\max}$, respectively. 
For simplicity, we assumed that the parameter vector $\vecalpha$ follows a zero mean Gaussian distribution, where the covariance matrix $\matC_{\vecalpha}$ is estimated from the parameter vectors $\vecalpha_k$ of the $K$ training shapes. With that, a random shape sample $\vecx_r$ is generated by drawing a sample of the vector $\vecalpha_r$ from its distribution and setting $\vecx_r = \matPhi \vecalpha_r$.
The specificity can be interpreted as a score for assessing how realistic synthesized shapes are on a coarse level of detail (the contribution of errors on fine details to the specificity score is marginal due to the dominance of the errors on coarse scales). %

For evaluating the \emph{generalisation error}, a collection of index sets $\mathcal{I} \subset 2^{\{1,\ldots,K\}}$ is used, where each set $\mathcal{J} \in \mathcal{I}$ denotes the set of indices of the \emph{test shapes} for one run and $\vert \mathcal{I} \vert$ is the number of runs. We used five-fold cross-validation, i.e.~$\vert \mathcal{I} \vert = 5$ and each set $\mathcal{J}$ contains $\operatorname{round}(\frac{K}{5})$ random integers from $\{1, \ldots, K\}$.
In each run, the deformation factors $\matPhi^{\mathcal{\bar J}}$ are computed using only the shapes with indices ${\mathcal{\bar J}} =  \{1,\ldots,K\} \setminus \mathcal{J}$. For all test shapes $\vecx_j$, where $j \in \mathcal{J}$, the parameter vector $\vecalpha_j$ is determined by solving $\matPhi^{\mathcal{\bar J}} \vecalpha_j = \vecx_j$ in the least-squares sense. Eventually, the average reconstruction error $e_{\text{avg}}(\vecx_j,\matPhi^{\mathcal{\bar J}} \vecalpha_j)$ and the maximum reconstruction error $e_{\max}(\vecx_j,\matPhi^{\mathcal{\bar J}} \vecalpha_j)$ are computed for each test shape, which we denote as $g_{\text{avg}}$ and $g_{\max}$, respectively.
Moreover, the \emph{sparse reconstruction errors} $g_{\text{avg}}^{0.05}$ and $g_{\max}^{0.05}$ are computed in a similar manner, with the difference that $\vecalpha_j$ is now determined by using only $5\%$ of the rows (selected randomly) of $\matPhi^{\mathcal{\bar J}}$ and $\vecx_j$, denoted by $\tilde\matPhi^{\mathcal{\bar J}}$ and $\tilde\vecx_j$. For that, we minimise $\| \tilde\matPhi^{\mathcal{\bar J}} \vecalpha_j - \tilde\vecx_j\|_2^2 + \| \Gamma \vecalpha_j \|_2^2$ with respect to $\vecalpha_j$, which is a least-squares problem with Tikhonov regularisation, where $\Gamma$ is obtained by Cholesky factorisation of $\matC_{\vecalpha} = \Gamma^T \Gamma$.
The purpose of this measure is to evaluate how well a deformation model interpolates an unseen shape from a small subset of its points, which is relevant for shape model-based surface reconstruction \cite{Bernard:2015wx,bernard2016arXiv}.

\subsection{Brain Structures}
The first evaluated dataset comprises $8$ brain structure meshes of ${K{=}17}$ subjects \cite{Bernard:2016tv}. 
All $8$ meshes together have a total number of ${N{=}1792}$ vertices that are in correspondence among all subjects. Moreover, all meshes have the same topology, i.e.~seen as graphs they are isomorphic. A single deformation model is used to model the deformation of all $8$ meshes in order to capture the interrelation between the brain structures. %
We fix the number of desired factors to $M{=}96$ to account for a sufficient amount of local details in the factors.
Whilst the meshes are smooth and comparably simple in shape (cf.~Fig.~\ref{brainDefFactors}), a particular challenge is that the training dataset comprises only $K{=}17$ shapes. 

\subsubsection{Second-order Terms}
Based on anatomical knowledge, we use the brain structure interrelation graph $\mathcal{G}_{\text{bs}} = (\mathcal{V}_{\text{bs}}, \mathcal{E}_{\text{bs}})$ shown in Fig.~\ref{bsgraph}, where an edge between two structures denotes that a deformation of one structure may have a direct effect on the deformation of the other structure.
\begin{figure}[h!]
      \vspace{-4mm}
     \centerline{\includegraphics[scale=0.73]{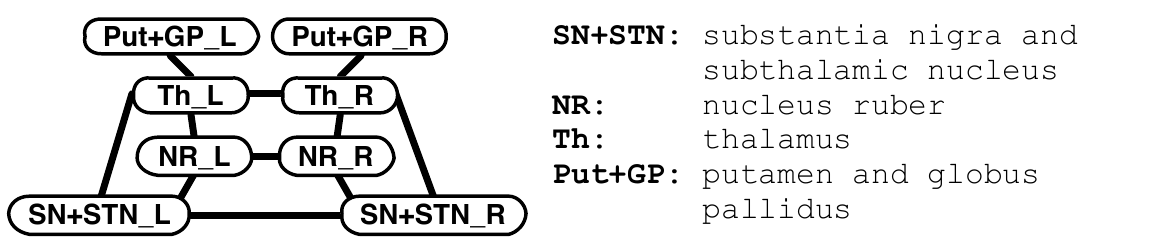}}
     \caption{Brain structure interrelation graph.}%
     \label{bsgraph}\vspace{-3mm}
\end{figure}
%\vspace{-4mm}
Using $\mathcal{G}_{\text{bs}}$, we now build a distance matrix that is then used in the SPLOCS method and our method. 
For $\mathfrak{o} \in \mathcal{V}_{\text{bs}}$, let $g_\mathfrak{o} \subset \{1,\ldots,N\}$ denote the (ordered) indices of the vertices of brain structure $\mathfrak{o}$. Let $\matD_{\text{euc}} \in \R^{N \times N}$ be the Euclidean distance matrix, where $(\matD_{\text{euc}})_{ij} = \| \bar{\matX}_{i,:} - \bar{\matX}_{j,:} \|_2$ is the Euclidean distance between vertex $i$ and $j$ of the mean shape $\bar{\matX}$. Moreover, let $\matD_{\text{geo}} \in \R^{N \times N}$ be the geodesic graph distance matrix of the mean shape $\bar{\matX}$ using the graph induced by the (shared) mesh topology. By $\matD_{\text{euc}}^{\mathfrak{o}} \in \R^{\vert g_\mathfrak{o} \vert \times \vert g_\mathfrak{o} \vert}$ and $\matD_{\text{geo}}^{\mathfrak{o}} \in \R^{\vert g_\mathfrak{o} \vert \times \vert g_\mathfrak{o} \vert}$ we denote the Euclidean distance matrix and the geodesic distance matrix of brain structure $\mathfrak{o}$, which are submatrices of $\matD_{\text{euc}}$ and $\matD_{\text{geo}}$, respectively. Let $\bar{d}^{\mathfrak{o}}$ denote the average vertex distance between neighbour vertices of brain structure $\mathfrak{o}$. We define the normalised geodesic graph distance matrix of brain structure $\mathfrak{o}$ by $\tilde\matD_{\text{geo}}^{\mathfrak{o}} = \frac{1}{\bar{d}^{\mathfrak{o}}}\matD_{\text{geo}}^{\mathfrak{o}}$ and the matrix $\tilde\matD_{\text{geo}} \in \R^{N \times N}$ is composed by the individual blocks $\tilde\matD_{\text{geo}}^{\mathfrak{o}}$. 

The normalised distance matrix between structure $\mathfrak{o}$ and $\mathfrak{\tilde o}$ is given by $\tilde\matD_{\text{bs}}^{\mathfrak{o},\mathfrak{\tilde o}} = \frac{2}{\bar{d}^{\mathfrak{o}}+ \bar{d}^{\mathfrak{\tilde o}}}[(\matD_{\text{euc}})_{g_{\mathfrak{o}}, g_{\mathfrak{\tilde o}}} {-} \onevec_{\vert g_{\mathfrak{o}} \vert} \onevec_{\vert g_{\mathfrak{\tilde o}}\vert}^T d_{\min}^{\mathfrak{o},\mathfrak{\tilde o}}] \in \R^{\vert g_{\mathfrak{o}} \vert \times \vert g_{\mathfrak{\tilde o}}\vert}$, where $d_{\min}^{\mathfrak{o},\mathfrak{\tilde o}}$ is the smallest element of $(\matD_{\text{euc}})_{g_{\mathfrak{o}}, g_{\mathfrak{\tilde o}}}$. The (symmetric) distance matrix $\tilde\matD_{\text{bs}} \in \R^{N \times N}$ between all structures is constructed by
\begin{align}
  (\tilde\matD_{\text{bs}})_{g_{\mathfrak{o}}, g_{\mathfrak{\tilde o}}} = \begin{cases}
    \tilde\matD_{\text{bs}}^{\mathfrak{o},\mathfrak{\tilde o}} & \text{ if } \quad (\mathfrak{o},\mathfrak{\tilde o}) \in \mathcal{E}_{\text{bs}} \\
    \zerovec_{\vert g_{\mathfrak{o}} \vert \times \vert g_{\mathfrak{\tilde o}}\vert} & \text{ else } 
  \end{cases} \,.
\end{align}
For the SPLOCS method we used the distance matrix $\matD = \alpha_{D} \tilde \matD_{\text{geo}} + (1-\alpha_{D}) \tilde \matD_{\text{bs}}$. For our method, we construct the graph $\mathcal{G} = (\mathcal{V}, \mathcal{E}, \vecomega)$ (cf.~section \ref{lindefmod}) by having an edge $e=(i,j)$ in $\mathcal{E}$ for each $\vecomega_e = \alpha_{D}\exp( - (\tilde \matD_{\text{geo}})_{i,j}^2) + (1-\alpha_D) \exp(- (\tilde\matD_{\text{bs}})_{ij}^2)$ that is larger than the threshold $\theta = 0.1$. In both cases we set $\alpha_D = 0.5$.

\begin{figure*}[t!]%
     \centerline{\includegraphics[scale=0.247]{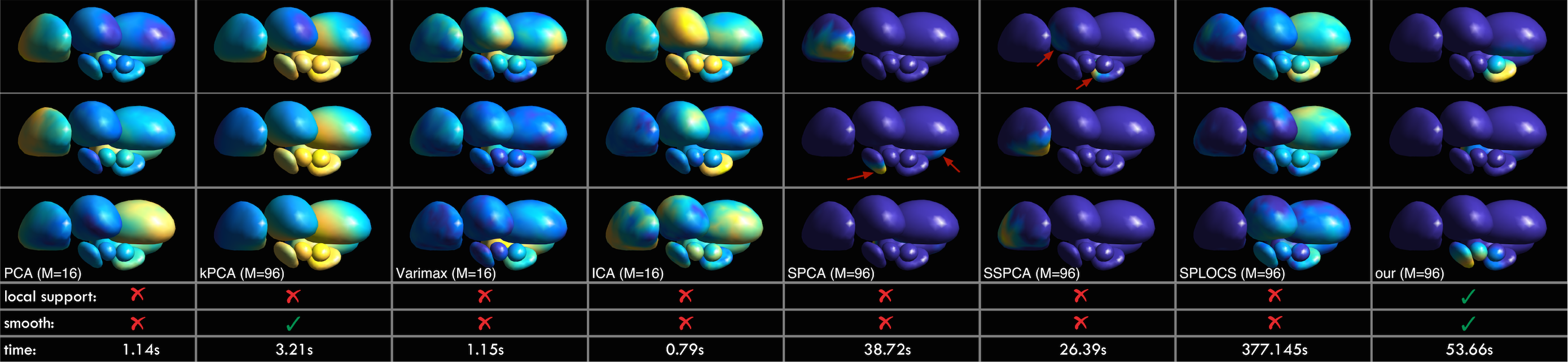}}
      \vspace{-2mm}
     \caption{The
     colour-coded magnitude (blue corresponds to zero, yellow to the maximum deformation in each plot) for the three deformation factors with largest $\ell_2$ norm is shown in the rows. The factors obtained by SPCA and SSPCA are sparse but not spatially localised (see red arrows). Our method is the only one that obtains local support factors.}
     \label{brainDefFactors}
\end{figure*}
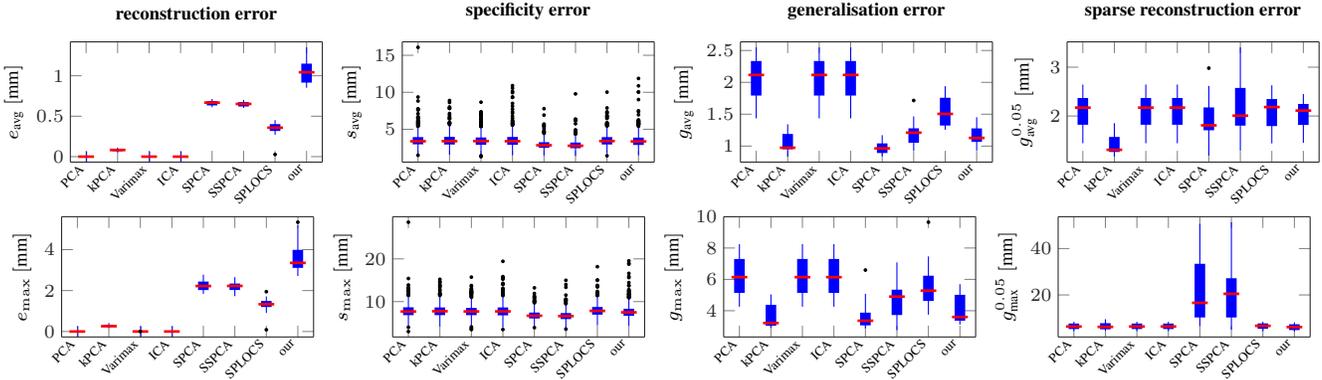
\begin{figure*}%
     \centerline{
        \subfigure{%
    % \tikzsetnextfilename{#1}%
    % This file was created by matlab2tikz v0.4.7 running on MATLAB 8.5.
% Copyright (c) 2008--2014, Nico Schlömer <nico.schloemer@gmail.com>
% All rights reserved.
% Minimal pgfplots version: 1.3
% 
% The latest updates can be retrieved from
%   http://www.mathworks.com/matlabcentral/fileexchange/22022-matlab2tikz
% where you can also make suggestions and rate matlab2tikz.
% 
\begin{tikzpicture}

\begin{axis}[%
width=3.35cm,
height=1.6cm,
unbounded coords=jump,
scale only axis,
separate axis lines,
every outer x axis line/.append style={white!15!black},
every x tick label/.append style={font=\color{white!15!black}},
xmin=0.5,
xmax=8.5,
xtick={1,2,3,4,5,6,7,8},
xticklabels={{PCA},{kPCA},{Varimax},{ICA},{SPCA},{SSPCA},{SPLOCS},{our}},
every outer y axis line/.append style={white!15!black},
every y tick label/.append style={font=\color{white!15!black}},
ymin=-0.0675869005407082,
ymax=1.41932491135501,
ylabel={$e_{\text{avg}}~[\text{mm}]$},
title style={font=\bfseries},
title={reconstruction error},
every x tick label/.append style={rotate=45, anchor=east, align=left, font=\tiny}
]
\addplot [color=blue,solid,forget plot]
  table[row sep=crcr]{%
1	6.14272347202661e-15\\
1	2.35425519750532e-14\\
};
\addplot [color=blue,solid,forget plot]
  table[row sep=crcr]{%
2	0.0519277153967054\\
2	0.112679469225825\\
};
\addplot [color=blue,solid,forget plot]
  table[row sep=crcr]{%
3	7.19159331768924e-15\\
3	2.76673643384708e-14\\
};
\addplot [color=blue,solid,forget plot]
  table[row sep=crcr]{%
4	6.74821385861984e-15\\
4	3.77432791455962e-14\\
};
\addplot [color=blue,solid,forget plot]
  table[row sep=crcr]{%
5	0.611411753257908\\
5	0.713262028216746\\
};
\addplot [color=blue,solid,forget plot]
  table[row sep=crcr]{%
6	0.601195688481272\\
6	0.701796638434952\\
};
\addplot [color=blue,solid,forget plot]
  table[row sep=crcr]{%
7	0.269711512458102\\
7	0.45189717050027\\
};
\addplot [color=blue,solid,forget plot]
  table[row sep=crcr]{%
8	0.851709741389065\\
8	1.35173801081429\\
};
\addplot [color=blue,solid,line width=4.0pt,forget plot]
  table[row sep=crcr]{%
1	8.02999507596591e-15\\
1	1.49216161763244e-14\\
};
\addplot [color=blue,solid,line width=4.0pt,forget plot]
  table[row sep=crcr]{%
2	0.0676192888468827\\
2	0.0868680032319096\\
};
\addplot [color=blue,solid,line width=4.0pt,forget plot]
  table[row sep=crcr]{%
3	1.19447978661095e-14\\
3	1.96397178248008e-14\\
};
\addplot [color=blue,solid,line width=4.0pt,forget plot]
  table[row sep=crcr]{%
4	1.12435768827984e-14\\
4	2.80239872810181e-14\\
};
\addplot [color=blue,solid,line width=4.0pt,forget plot]
  table[row sep=crcr]{%
5	0.63328298370827\\
5	0.687025805241334\\
};
\addplot [color=blue,solid,line width=4.0pt,forget plot]
  table[row sep=crcr]{%
6	0.620052545144967\\
6	0.674394392001618\\
};
\addplot [color=blue,solid,line width=4.0pt,forget plot]
  table[row sep=crcr]{%
7	0.317368969627774\\
7	0.402250865514412\\
};
\addplot [color=blue,solid,line width=4.0pt,forget plot]
  table[row sep=crcr]{%
8	0.915507042026091\\
8	1.14995016812364\\
};
\addplot [color=red,solid,line width=1.0pt,forget plot]
  table[row sep=crcr]{%
0.75	1.00463634361022e-14\\
1.25	1.00463634361022e-14\\
};
\addplot [color=red,solid,line width=1.0pt,forget plot]
  table[row sep=crcr]{%
1.75	0.0821794814660452\\
2.25	0.0821794814660452\\
};
\addplot [color=red,solid,line width=1.0pt,forget plot]
  table[row sep=crcr]{%
2.75	1.32684446943313e-14\\
3.25	1.32684446943313e-14\\
};
\addplot [color=red,solid,line width=1.0pt,forget plot]
  table[row sep=crcr]{%
3.75	1.55600195745122e-14\\
4.25	1.55600195745122e-14\\
};
\addplot [color=red,solid,line width=1.0pt,forget plot]
  table[row sep=crcr]{%
4.75	0.667969141093421\\
5.25	0.667969141093421\\
};
\addplot [color=red,solid,line width=1.0pt,forget plot]
  table[row sep=crcr]{%
5.75	0.651721008227043\\
6.25	0.651721008227043\\
};
\addplot [color=red,solid,line width=1.0pt,forget plot]
  table[row sep=crcr]{%
6.75	0.357974070696189\\
7.25	0.357974070696189\\
};
\addplot [color=red,solid,line width=1.0pt,forget plot]
  table[row sep=crcr]{%
7.75	1.04331156709113\\
8.25	1.04331156709113\\
};
\addplot [color=black,mark size=0.5pt,only marks,mark=*,mark options={solid,fill=black,draw=black},forget plot]
  table[row sep=crcr]{%
nan	nan\\
};
\addplot [color=black,mark size=0.5pt,only marks,mark=*,mark options={solid,fill=black,draw=black},forget plot]
  table[row sep=crcr]{%
nan	nan\\
};
\addplot [color=black,mark size=0.5pt,only marks,mark=*,mark options={solid,fill=black,draw=black},forget plot]
  table[row sep=crcr]{%
nan	nan\\
};
\addplot [color=black,mark size=0.5pt,only marks,mark=*,mark options={solid,fill=black,draw=black},forget plot]
  table[row sep=crcr]{%
nan	nan\\
};
\addplot [color=black,mark size=0.5pt,only marks,mark=*,mark options={solid,fill=black,draw=black},forget plot]
  table[row sep=crcr]{%
nan	nan\\
};
\addplot [color=black,mark size=0.5pt,only marks,mark=*,mark options={solid,fill=black,draw=black},forget plot]
  table[row sep=crcr]{%
nan	nan\\
};
\addplot [color=black,mark size=0.5pt,only marks,mark=*,mark options={solid,fill=black,draw=black},forget plot]
  table[row sep=crcr]{%
7	0.0271989031186847\\
};
\addplot [color=black,mark size=0.5pt,only marks,mark=*,mark options={solid,fill=black,draw=black},forget plot]
  table[row sep=crcr]{%
nan	nan\\
};
\end{axis}
\end{tikzpicture}%%
 } \hfil
        \subfigure{%
    % \tikzsetnextfilename{#1}%
    % This file was created by matlab2tikz v0.4.7 running on MATLAB 8.5.
% Copyright (c) 2008--2014, Nico Schlömer <nico.schloemer@gmail.com>
% All rights reserved.
% Minimal pgfplots version: 1.3
% 
% The latest updates can be retrieved from
%   http://www.mathworks.com/matlabcentral/fileexchange/22022-matlab2tikz
% where you can also make suggestions and rate matlab2tikz.
% 
\begin{tikzpicture}

\begin{axis}[%
width=3.35cm,
height=1.6cm,
scale only axis,
separate axis lines,
every outer x axis line/.append style={white!15!black},
every x tick label/.append style={font=\color{white!15!black}},
xmin=0.5,
xmax=8.5,
xtick={1,2,3,4,5,6,7,8},
xticklabels={{PCA},{kPCA},{Varimax},{ICA},{SPCA},{SSPCA},{SPLOCS},{our}},
every outer y axis line/.append style={white!15!black},
every y tick label/.append style={font=\color{white!15!black}},
ymin=0.560253866421579,
ymax=16.8182087286496,
ylabel={$s_{\text{avg}}~[\text{mm}]$},
title style={font=\bfseries},
title={specificity error},
every x tick label/.append style={rotate=45, anchor=east, align=left, font=\tiny}
]
\addplot [color=blue,solid,forget plot]
  table[row sep=crcr]{%
1	1.61274909020978\\
1	5.34028829446794\\
};
\addplot [color=blue,solid,forget plot]
  table[row sep=crcr]{%
2	1.5558530450168\\
2	5.5505960703576\\
};
\addplot [color=blue,solid,forget plot]
  table[row sep=crcr]{%
3	1.48799259342314\\
3	5.36728234084417\\
};
\addplot [color=blue,solid,forget plot]
  table[row sep=crcr]{%
4	1.50253169319392\\
4	5.50549445176275\\
};
\addplot [color=blue,solid,forget plot]
  table[row sep=crcr]{%
5	1.44944890596315\\
5	4.3551959635788\\
};
\addplot [color=blue,solid,forget plot]
  table[row sep=crcr]{%
6	1.40238862374653\\
6	4.25986597697911\\
};
\addplot [color=blue,solid,forget plot]
  table[row sep=crcr]{%
7	1.75800944882958\\
7	5.40644305479832\\
};
\addplot [color=blue,solid,forget plot]
  table[row sep=crcr]{%
8	1.48485372018121\\
8	5.3653475361935\\
};
\addplot [color=blue,solid,line width=4.0pt,forget plot]
  table[row sep=crcr]{%
1	2.96624295245305\\
1	3.9424006199701\\
};
\addplot [color=blue,solid,line width=4.0pt,forget plot]
  table[row sep=crcr]{%
2	2.93602113126632\\
2	3.98185906296982\\
};
\addplot [color=blue,solid,line width=4.0pt,forget plot]
  table[row sep=crcr]{%
3	2.90776339720572\\
3	3.89433093316106\\
};
\addplot [color=blue,solid,line width=4.0pt,forget plot]
  table[row sep=crcr]{%
4	2.89991511055708\\
4	3.96045075049696\\
};
\addplot [color=blue,solid,line width=4.0pt,forget plot]
  table[row sep=crcr]{%
5	2.49761589611922\\
5	3.25041763961088\\
};
\addplot [color=blue,solid,line width=4.0pt,forget plot]
  table[row sep=crcr]{%
6	2.43993810013827\\
6	3.2082808318406\\
};
\addplot [color=blue,solid,line width=4.0pt,forget plot]
  table[row sep=crcr]{%
7	2.94384727577783\\
7	3.942720884633\\
};
\addplot [color=blue,solid,line width=4.0pt,forget plot]
  table[row sep=crcr]{%
8	2.87843243623095\\
8	3.87437951164573\\
};
\addplot [color=red,solid,line width=1.0pt,forget plot]
  table[row sep=crcr]{%
0.75	3.38197791320443\\
1.25	3.38197791320443\\
};
\addplot [color=red,solid,line width=1.0pt,forget plot]
  table[row sep=crcr]{%
1.75	3.39034834256747\\
2.25	3.39034834256747\\
};
\addplot [color=red,solid,line width=1.0pt,forget plot]
  table[row sep=crcr]{%
2.75	3.40307545658499\\
3.25	3.40307545658499\\
};
\addplot [color=red,solid,line width=1.0pt,forget plot]
  table[row sep=crcr]{%
3.75	3.38482311244629\\
4.25	3.38482311244629\\
};
\addplot [color=red,solid,line width=1.0pt,forget plot]
  table[row sep=crcr]{%
4.75	2.83791654701367\\
5.25	2.83791654701367\\
};
\addplot [color=red,solid,line width=1.0pt,forget plot]
  table[row sep=crcr]{%
5.75	2.76802726877916\\
6.25	2.76802726877916\\
};
\addplot [color=red,solid,line width=1.0pt,forget plot]
  table[row sep=crcr]{%
6.75	3.40625910896863\\
7.25	3.40625910896863\\
};
\addplot [color=red,solid,line width=1.0pt,forget plot]
  table[row sep=crcr]{%
7.75	3.36011589078863\\
8.25	3.36011589078863\\
};
\addplot [color=black,mark size=0.5pt,only marks,mark=*,mark options={solid,fill=black,draw=black},forget plot]
  table[row sep=crcr]{%
1	1.48300001079589\\
1	5.40733469682028\\
1	5.46596975950147\\
1	5.47620994289079\\
1	5.61909714205232\\
1	5.62468342809035\\
1	5.63770673730661\\
1	5.69691991397461\\
1	5.75652069351862\\
1	5.83867635776295\\
1	5.84374367861011\\
1	5.90667815288234\\
1	5.94463365374561\\
1	5.96414770534703\\
1	6.0378533966862\\
1	6.0396147160261\\
1	6.10471247477834\\
1	6.1667337093846\\
1	6.25550567628552\\
1	6.44759479752337\\
1	6.50674741982778\\
1	6.60610905889881\\
1	6.6106776743934\\
1	6.67480502389853\\
1	7.09965512550056\\
1	7.61201213708997\\
1	7.64739555362303\\
1	7.72969638772739\\
1	7.75473827741911\\
1	8.82079603049667\\
1	9.35839373312072\\
1	16.0792107803665\\
};
\addplot [color=black,mark size=0.5pt,only marks,mark=*,mark options={solid,fill=black,draw=black},forget plot]
  table[row sep=crcr]{%
2	5.63378688115871\\
2	5.74012423045986\\
2	5.80566471261527\\
2	5.84946807587703\\
2	5.85730229386448\\
2	5.87914687195224\\
2	5.90258234815107\\
2	6.40759163854216\\
2	6.45271612691983\\
2	6.51489720948685\\
2	6.69906498886557\\
2	6.92481792351411\\
2	7.12694397407046\\
2	7.83025764357823\\
2	7.83226970909869\\
2	7.84738995436054\\
2	8.01214018058269\\
2	8.62601342737523\\
2	8.88320994153796\\
};
\addplot [color=black,mark size=0.5pt,only marks,mark=*,mark options={solid,fill=black,draw=black},forget plot]
  table[row sep=crcr]{%
3	1.29925181470467\\
3	1.42018738427107\\
3	5.39648223848798\\
3	5.40421282868032\\
3	5.42606600252007\\
3	5.43359696117682\\
3	5.43762213957059\\
3	5.43868954514719\\
3	5.45556389718129\\
3	5.6468302999363\\
3	5.65093465503908\\
3	5.65166921938108\\
3	5.66667206656629\\
3	5.75824533237168\\
3	5.79909052050896\\
3	5.84397374519663\\
3	5.94786436525512\\
3	5.98027672359042\\
3	6.05838639033894\\
3	6.33150604382338\\
3	6.34308297470479\\
3	6.36225172124121\\
3	6.42239320747741\\
3	6.4614409918919\\
3	6.69589311805412\\
3	6.8401248030298\\
3	6.96554698612788\\
3	6.96659143711054\\
3	7.00646449157777\\
3	7.20038561833635\\
3	7.28307833227922\\
3	7.37153660132348\\
3	8.66212494665384\\
};
\addplot [color=black,mark size=0.5pt,only marks,mark=*,mark options={solid,fill=black,draw=black},forget plot]
  table[row sep=crcr]{%
4	5.56017231805677\\
4	5.5962939183071\\
4	5.7152532891024\\
4	5.73755054868447\\
4	5.76535567528108\\
4	5.79619673211197\\
4	5.98129809708458\\
4	5.9982594967525\\
4	6.10268461237068\\
4	6.16750822155245\\
4	6.45226525171938\\
4	6.74905251831213\\
4	7.16114349410359\\
4	7.36781484390658\\
4	7.70941305054388\\
4	8.11239644883154\\
4	8.49875059421999\\
4	9.30474107432021\\
4	9.36787189963635\\
4	9.94910464518864\\
4	10.1199357641457\\
4	10.5538887122452\\
4	10.8970545802976\\
};
\addplot [color=black,mark size=0.5pt,only marks,mark=*,mark options={solid,fill=black,draw=black},forget plot]
  table[row sep=crcr]{%
5	4.45459994655597\\
5	4.55880220861135\\
5	4.5903915165324\\
5	4.6134390345471\\
5	4.70726358520439\\
5	4.76345378861824\\
5	5.04462375593958\\
5	5.34067060384267\\
5	5.41094497839934\\
5	5.45589648514484\\
5	5.64825718542482\\
5	5.69290962031444\\
5	6.91550328413651\\
5	7.7805467742869\\
};
\addplot [color=black,mark size=0.5pt,only marks,mark=*,mark options={solid,fill=black,draw=black},forget plot]
  table[row sep=crcr]{%
6	4.36682771182719\\
6	4.38092800267321\\
6	4.39026607207259\\
6	4.4328871551004\\
6	4.43945050817574\\
6	4.45147424753556\\
6	4.56498848083824\\
6	4.61529498642377\\
6	4.87746929903191\\
6	4.89924936711811\\
6	5.75279342791843\\
6	6.15287677550461\\
6	6.18218126621647\\
6	6.22297686604729\\
6	6.3251858984485\\
6	6.36637290510273\\
6	9.78678806156092\\
};
\addplot [color=black,mark size=0.5pt,only marks,mark=*,mark options={solid,fill=black,draw=black},forget plot]
  table[row sep=crcr]{%
7	1.42071702776304\\
7	5.46543326704899\\
7	5.49201054773347\\
7	5.50935555325352\\
7	5.53055086338449\\
7	5.62788449542153\\
7	5.65568501881566\\
7	5.67478572516651\\
7	5.73190412159585\\
7	5.7590720770441\\
7	5.76589643984145\\
7	5.84700533227056\\
7	5.87885484414587\\
7	6.4599195293154\\
7	6.55046397190353\\
7	6.7245181793853\\
7	6.7526214689269\\
7	6.84857212347617\\
7	7.14632209518174\\
7	7.27456370941551\\
7	7.35285305303799\\
7	7.8820887401112\\
7	10.0202834461502\\
};
\addplot [color=black,mark size=0.5pt,only marks,mark=*,mark options={solid,fill=black,draw=black},forget plot]
  table[row sep=crcr]{%
8	5.40203977693024\\
8	5.53167470167511\\
8	5.54950692403795\\
8	5.61819739290132\\
8	5.67800075592853\\
8	5.70127940816217\\
8	5.71925341161961\\
8	5.72955095392235\\
8	5.74193078070767\\
8	5.74995025599702\\
8	5.7510091965317\\
8	5.75904882847488\\
8	5.84757657196876\\
8	5.86383036947865\\
8	5.89399899088992\\
8	5.96625764040468\\
8	5.98475387922107\\
8	6.01306191095537\\
8	6.06576803332538\\
8	6.09707523228396\\
8	6.11426988381807\\
8	6.11829686778094\\
8	6.15253121086838\\
8	6.1775846589\\
8	6.26767252649281\\
8	6.96188975741885\\
8	7.36982650025685\\
8	7.49903780859454\\
8	9.01273170420808\\
8	10.0128673623669\\
8	10.0474020420095\\
8	10.8742598970765\\
8	11.8701025834861\\
};
\end{axis}
\end{tikzpicture}%%
 } \hfil
        \subfigure{%
    % \tikzsetnextfilename{#1}%
    % This file was created by matlab2tikz v0.4.7 running on MATLAB 8.5.
% Copyright (c) 2008--2014, Nico Schlömer <nico.schloemer@gmail.com>
% All rights reserved.
% Minimal pgfplots version: 1.3
% 
% The latest updates can be retrieved from
%   http://www.mathworks.com/matlabcentral/fileexchange/22022-matlab2tikz
% where you can also make suggestions and rate matlab2tikz.
% 
\begin{tikzpicture}

\begin{axis}[%
width=3.35cm,
height=1.6cm,
unbounded coords=jump,
scale only axis,
separate axis lines,
every outer x axis line/.append style={white!15!black},
every x tick label/.append style={font=\color{white!15!black}},
xmin=0.5,
xmax=8.5,
xtick={1,2,3,4,5,6,7,8},
xticklabels={{PCA},{kPCA},{Varimax},{ICA},{SPCA},{SSPCA},{SPLOCS},{our}},
every outer y axis line/.append style={white!15!black},
every y tick label/.append style={font=\color{white!15!black}},
ymin=0.748116750476481,
ymax=2.6354777857299,
ylabel={$g_{\text{avg}}~[\text{mm}]$},
title style={font=\bfseries},
title={generalisation error},
every x tick label/.append style={rotate=45, anchor=east, align=left, font=\tiny}
]
\addplot [color=blue,solid,forget plot]
  table[row sep=crcr]{%
1	1.43503686579834\\
1	2.54968864776383\\
};
\addplot [color=blue,solid,forget plot]
  table[row sep=crcr]{%
2	0.833905888442545\\
2	1.34152677673347\\
};
\addplot [color=blue,solid,forget plot]
  table[row sep=crcr]{%
3	1.43503686579834\\
3	2.54968864776383\\
};
\addplot [color=blue,solid,forget plot]
  table[row sep=crcr]{%
4	1.43503686579834\\
4	2.54968864776383\\
};
\addplot [color=blue,solid,forget plot]
  table[row sep=crcr]{%
5	0.837899129483047\\
5	1.17068262892941\\
};
\addplot [color=blue,solid,forget plot]
  table[row sep=crcr]{%
6	0.927579041195051\\
6	1.4652743451596\\
};
\addplot [color=blue,solid,forget plot]
  table[row sep=crcr]{%
7	1.25673642254051\\
7	1.93938293517003\\
};
\addplot [color=blue,solid,forget plot]
  table[row sep=crcr]{%
8	0.929381662088435\\
8	1.45246110803437\\
};
\addplot [color=blue,solid,line width=4.0pt,forget plot]
  table[row sep=crcr]{%
1	1.79255981107551\\
1	2.33458398293861\\
};
\addplot [color=blue,solid,line width=4.0pt,forget plot]
  table[row sep=crcr]{%
2	0.949989376922774\\
2	1.18978435049488\\
};
\addplot [color=blue,solid,line width=4.0pt,forget plot]
  table[row sep=crcr]{%
3	1.79255981107551\\
3	2.33458398293861\\
};
\addplot [color=blue,solid,line width=4.0pt,forget plot]
  table[row sep=crcr]{%
4	1.79255981107551\\
4	2.33458398293861\\
};
\addplot [color=blue,solid,line width=4.0pt,forget plot]
  table[row sep=crcr]{%
5	0.890922908583421\\
5	1.04186220498246\\
};
\addplot [color=blue,solid,line width=4.0pt,forget plot]
  table[row sep=crcr]{%
6	1.05122813636988\\
6	1.27821876632231\\
};
\addplot [color=blue,solid,line width=4.0pt,forget plot]
  table[row sep=crcr]{%
7	1.32226494423058\\
7	1.75805206328313\\
};
\addplot [color=blue,solid,line width=4.0pt,forget plot]
  table[row sep=crcr]{%
8	1.06544074017437\\
8	1.27701852702536\\
};
\addplot [color=red,solid,line width=1.0pt,forget plot]
  table[row sep=crcr]{%
0.75	2.11611647614197\\
1.25	2.11611647614197\\
};
\addplot [color=red,solid,line width=1.0pt,forget plot]
  table[row sep=crcr]{%
1.75	0.973120552076018\\
2.25	0.973120552076018\\
};
\addplot [color=red,solid,line width=1.0pt,forget plot]
  table[row sep=crcr]{%
2.75	2.11611647614197\\
3.25	2.11611647614197\\
};
\addplot [color=red,solid,line width=1.0pt,forget plot]
  table[row sep=crcr]{%
3.75	2.11611647614198\\
4.25	2.11611647614198\\
};
\addplot [color=red,solid,line width=1.0pt,forget plot]
  table[row sep=crcr]{%
4.75	0.963379230681382\\
5.25	0.963379230681382\\
};
\addplot [color=red,solid,line width=1.0pt,forget plot]
  table[row sep=crcr]{%
5.75	1.21038123150902\\
6.25	1.21038123150902\\
};
\addplot [color=red,solid,line width=1.0pt,forget plot]
  table[row sep=crcr]{%
6.75	1.50449892502992\\
7.25	1.50449892502992\\
};
\addplot [color=red,solid,line width=1.0pt,forget plot]
  table[row sep=crcr]{%
7.75	1.12758605070021\\
8.25	1.12758605070021\\
};
\addplot [color=black,mark size=0.5pt,only marks,mark=*,mark options={solid,fill=black,draw=black},forget plot]
  table[row sep=crcr]{%
nan	nan\\
};
\addplot [color=black,mark size=0.5pt,only marks,mark=*,mark options={solid,fill=black,draw=black},forget plot]
  table[row sep=crcr]{%
nan	nan\\
};
\addplot [color=black,mark size=0.5pt,only marks,mark=*,mark options={solid,fill=black,draw=black},forget plot]
  table[row sep=crcr]{%
nan	nan\\
};
\addplot [color=black,mark size=0.5pt,only marks,mark=*,mark options={solid,fill=black,draw=black},forget plot]
  table[row sep=crcr]{%
nan	nan\\
};
\addplot [color=black,mark size=0.5pt,only marks,mark=*,mark options={solid,fill=black,draw=black},forget plot]
  table[row sep=crcr]{%
nan	nan\\
};
\addplot [color=black,mark size=0.5pt,only marks,mark=*,mark options={solid,fill=black,draw=black},forget plot]
  table[row sep=crcr]{%
6	1.71531425299899\\
};
\addplot [color=black,mark size=0.5pt,only marks,mark=*,mark options={solid,fill=black,draw=black},forget plot]
  table[row sep=crcr]{%
nan	nan\\
};
\addplot [color=black,mark size=0.5pt,only marks,mark=*,mark options={solid,fill=black,draw=black},forget plot]
  table[row sep=crcr]{%
nan	nan\\
};
\end{axis}
\end{tikzpicture}%%
 } \hfil
        \subfigure{%
    % \tikzsetnextfilename{#1}%
    % This file was created by matlab2tikz v0.4.7 running on MATLAB 8.5.
% Copyright (c) 2008--2014, Nico Schlömer <nico.schloemer@gmail.com>
% All rights reserved.
% Minimal pgfplots version: 1.3
% 
% The latest updates can be retrieved from
%   http://www.mathworks.com/matlabcentral/fileexchange/22022-matlab2tikz
% where you can also make suggestions and rate matlab2tikz.
% 
\begin{tikzpicture}

\begin{axis}[%
width=3.35cm,
height=1.6cm,
unbounded coords=jump,
scale only axis,
separate axis lines,
every outer x axis line/.append style={white!15!black},
every x tick label/.append style={font=\color{white!15!black}},
xmin=0.5,
xmax=8.5,
xtick={1,2,3,4,5,6,7,8},
xticklabels={{PCA},{kPCA},{Varimax},{ICA},{SPCA},{SSPCA},{SPLOCS},{our}},
every outer y axis line/.append style={white!15!black},
every y tick label/.append style={font=\color{white!15!black}},
ymin=1.0560020150112,
ymax=3.51864296233463,
ylabel={$g_{\text{avg}}^{0.05}~[\text{mm}]$},
title style={font=\bfseries},
title={sparse reconstruction error},
every x tick label/.append style={rotate=45, anchor=east, align=left, font=\tiny}
]
\addplot [color=blue,solid,forget plot]
  table[row sep=crcr]{%
1	1.4422273543522\\
1	2.64360871402138\\
};
\addplot [color=blue,solid,forget plot]
  table[row sep=crcr]{%
2	1.16794023988954\\
2	1.85245739540011\\
};
\addplot [color=blue,solid,forget plot]
  table[row sep=crcr]{%
3	1.4422273543522\\
3	2.64360871402138\\
};
\addplot [color=blue,solid,forget plot]
  table[row sep=crcr]{%
4	1.4422273543522\\
4	2.64360871402138\\
};
\addplot [color=blue,solid,forget plot]
  table[row sep=crcr]{%
5	1.18981329464009\\
5	2.61620129347959\\
};
\addplot [color=blue,solid,forget plot]
  table[row sep=crcr]{%
6	1.29095909947549\\
6	3.4067047374563\\
};
\addplot [color=blue,solid,forget plot]
  table[row sep=crcr]{%
7	1.43925976333935\\
7	2.62960424146664\\
};
\addplot [color=blue,solid,forget plot]
  table[row sep=crcr]{%
8	1.45116705042192\\
8	2.450526007045\\
};
\addplot [color=blue,solid,line width=4.0pt,forget plot]
  table[row sep=crcr]{%
1	1.82110603082426\\
1	2.36895103140003\\
};
\addplot [color=blue,solid,line width=4.0pt,forget plot]
  table[row sep=crcr]{%
2	1.26358814641349\\
2	1.57143545840145\\
};
\addplot [color=blue,solid,line width=4.0pt,forget plot]
  table[row sep=crcr]{%
3	1.82110603082426\\
3	2.36895103140003\\
};
\addplot [color=blue,solid,line width=4.0pt,forget plot]
  table[row sep=crcr]{%
4	1.82110603082426\\
4	2.36895103140003\\
};
\addplot [color=blue,solid,line width=4.0pt,forget plot]
  table[row sep=crcr]{%
5	1.7080918096062\\
5	2.17587321332228\\
};
\addplot [color=blue,solid,line width=4.0pt,forget plot]
  table[row sep=crcr]{%
6	1.80339926613254\\
6	2.57484565189368\\
};
\addplot [color=blue,solid,line width=4.0pt,forget plot]
  table[row sep=crcr]{%
7	1.79409587351531\\
7	2.34709152946341\\
};
\addplot [color=blue,solid,line width=4.0pt,forget plot]
  table[row sep=crcr]{%
8	1.81836506699204\\
8	2.24698502574387\\
};
\addplot [color=red,solid,line width=1.0pt,forget plot]
  table[row sep=crcr]{%
0.75	2.17101700814461\\
1.25	2.17101700814461\\
};
\addplot [color=red,solid,line width=1.0pt,forget plot]
  table[row sep=crcr]{%
1.75	1.30802464176432\\
2.25	1.30802464176432\\
};
\addplot [color=red,solid,line width=1.0pt,forget plot]
  table[row sep=crcr]{%
2.75	2.17101700814462\\
3.25	2.17101700814462\\
};
\addplot [color=red,solid,line width=1.0pt,forget plot]
  table[row sep=crcr]{%
3.75	2.17101700814461\\
4.25	2.17101700814461\\
};
\addplot [color=red,solid,line width=1.0pt,forget plot]
  table[row sep=crcr]{%
4.75	1.80882520730622\\
5.25	1.80882520730622\\
};
\addplot [color=red,solid,line width=1.0pt,forget plot]
  table[row sep=crcr]{%
5.75	2.00704215043387\\
6.25	2.00704215043387\\
};
\addplot [color=red,solid,line width=1.0pt,forget plot]
  table[row sep=crcr]{%
6.75	2.18297038709687\\
7.25	2.18297038709687\\
};
\addplot [color=red,solid,line width=1.0pt,forget plot]
  table[row sep=crcr]{%
7.75	2.11133061238332\\
8.25	2.11133061238332\\
};
\addplot [color=black,mark size=0.5pt,only marks,mark=*,mark options={solid,fill=black,draw=black},forget plot]
  table[row sep=crcr]{%
nan	nan\\
};
\addplot [color=black,mark size=0.5pt,only marks,mark=*,mark options={solid,fill=black,draw=black},forget plot]
  table[row sep=crcr]{%
nan	nan\\
};
\addplot [color=black,mark size=0.5pt,only marks,mark=*,mark options={solid,fill=black,draw=black},forget plot]
  table[row sep=crcr]{%
nan	nan\\
};
\addplot [color=black,mark size=0.5pt,only marks,mark=*,mark options={solid,fill=black,draw=black},forget plot]
  table[row sep=crcr]{%
nan	nan\\
};
\addplot [color=black,mark size=0.5pt,only marks,mark=*,mark options={solid,fill=black,draw=black},forget plot]
  table[row sep=crcr]{%
5	2.98047714426036\\
};
\addplot [color=black,mark size=0.5pt,only marks,mark=*,mark options={solid,fill=black,draw=black},forget plot]
  table[row sep=crcr]{%
nan	nan\\
};
\addplot [color=black,mark size=0.5pt,only marks,mark=*,mark options={solid,fill=black,draw=black},forget plot]
  table[row sep=crcr]{%
nan	nan\\
};
\addplot [color=black,mark size=0.5pt,only marks,mark=*,mark options={solid,fill=black,draw=black},forget plot]
  table[row sep=crcr]{%
nan	nan\\
};
\end{axis}
\end{tikzpicture}%%
}
     } 
     \vspace{-5mm}
     \centerline{
        \subfigure{%
    % \tikzsetnextfilename{#1}%
    % This file was created by matlab2tikz v0.4.7 running on MATLAB 8.5.
% Copyright (c) 2008--2014, Nico Schlömer <nico.schloemer@gmail.com>
% All rights reserved.
% Minimal pgfplots version: 1.3
% 
% The latest updates can be retrieved from
%   http://www.mathworks.com/matlabcentral/fileexchange/22022-matlab2tikz
% where you can also make suggestions and rate matlab2tikz.
% 
\begin{tikzpicture}

\begin{axis}[%
width=3.35cm,
height=1.6cm,
unbounded coords=jump,
scale only axis,
separate axis lines,
every outer x axis line/.append style={white!15!black},
every x tick label/.append style={font=\color{white!15!black}},
xmin=0.5,
xmax=8.5,
xtick={1,2,3,4,5,6,7,8},
xticklabels={{PCA},{kPCA},{Varimax},{ICA},{SPCA},{SSPCA},{SPLOCS},{our}},
every outer y axis line/.append style={white!15!black},
every y tick label/.append style={font=\color{white!15!black}},
ymin=-0.267056230617295,
ymax=5.60818084296354,
ylabel={$e_{\max}~[\text{mm}]$},
every x tick label/.append style={rotate=45, anchor=east, align=left, font=\tiny}
]
\addplot [color=blue,solid,forget plot]
  table[row sep=crcr]{%
1	1.62805793556083e-14\\
1	5.90221124448794e-14\\
};
\addplot [color=blue,solid,forget plot]
  table[row sep=crcr]{%
2	0.166741286163411\\
2	0.41537644663779\\
};
\addplot [color=blue,solid,forget plot]
  table[row sep=crcr]{%
3	1.92142370039363e-14\\
3	6.57791917336877e-14\\
};
\addplot [color=blue,solid,forget plot]
  table[row sep=crcr]{%
4	2.07157025572446e-14\\
4	9.2370555648813e-14\\
};
\addplot [color=blue,solid,forget plot]
  table[row sep=crcr]{%
5	1.84185688100246\\
5	2.77533076029395\\
};
\addplot [color=blue,solid,forget plot]
  table[row sep=crcr]{%
6	1.72662142532216\\
6	2.65370191605703\\
};
\addplot [color=blue,solid,forget plot]
  table[row sep=crcr]{%
7	0.906130275522009\\
7	1.71634421483041\\
};
\addplot [color=blue,solid,forget plot]
  table[row sep=crcr]{%
8	2.71236477377645\\
8	5.1896788837798\\
};
\addplot [color=blue,solid,line width=4.0pt,forget plot]
  table[row sep=crcr]{%
1	2.26962106730995e-14\\
1	3.84413924908129e-14\\
};
\addplot [color=blue,solid,line width=4.0pt,forget plot]
  table[row sep=crcr]{%
2	0.2363814129736\\
2	0.316674749146871\\
};
\addplot [color=blue,solid,line width=4.0pt,forget plot]
  table[row sep=crcr]{%
3	3.38906323110192e-14\\
3	5.49944460039743e-14\\
};
\addplot [color=blue,solid,line width=4.0pt,forget plot]
  table[row sep=crcr]{%
4	2.58561427636847e-14\\
4	6.82265835016356e-14\\
};
\addplot [color=blue,solid,line width=4.0pt,forget plot]
  table[row sep=crcr]{%
5	2.02540839893988\\
5	2.43954033454077\\
};
\addplot [color=blue,solid,line width=4.0pt,forget plot]
  table[row sep=crcr]{%
6	2.02798187078215\\
6	2.34412717479191\\
};
\addplot [color=blue,solid,line width=4.0pt,forget plot]
  table[row sep=crcr]{%
7	1.19773548892185\\
7	1.49277016931487\\
};
\addplot [color=blue,solid,line width=4.0pt,forget plot]
  table[row sep=crcr]{%
8	3.10844485150807\\
8	3.98887445100809\\
};
\addplot [color=red,solid,line width=1.0pt,forget plot]
  table[row sep=crcr]{%
0.75	2.95644699425659e-14\\
1.25	2.95644699425659e-14\\
};
\addplot [color=red,solid,line width=1.0pt,forget plot]
  table[row sep=crcr]{%
1.75	0.258911310507885\\
2.25	0.258911310507885\\
};
\addplot [color=red,solid,line width=1.0pt,forget plot]
  table[row sep=crcr]{%
2.75	3.70169068449661e-14\\
3.25	3.70169068449661e-14\\
};
\addplot [color=red,solid,line width=1.0pt,forget plot]
  table[row sep=crcr]{%
3.75	3.83051078018465e-14\\
4.25	3.83051078018465e-14\\
};
\addplot [color=red,solid,line width=1.0pt,forget plot]
  table[row sep=crcr]{%
4.75	2.22243250214085\\
5.25	2.22243250214085\\
};
\addplot [color=red,solid,line width=1.0pt,forget plot]
  table[row sep=crcr]{%
5.75	2.22744896341752\\
6.25	2.22744896341752\\
};
\addplot [color=red,solid,line width=1.0pt,forget plot]
  table[row sep=crcr]{%
6.75	1.33283312654954\\
7.25	1.33283312654954\\
};
\addplot [color=red,solid,line width=1.0pt,forget plot]
  table[row sep=crcr]{%
7.75	3.35237233182128\\
8.25	3.35237233182128\\
};
\addplot [color=black,mark size=0.5pt,only marks,mark=*,mark options={solid,fill=black,draw=black},forget plot]
  table[row sep=crcr]{%
nan	nan\\
};
\addplot [color=black,mark size=0.5pt,only marks,mark=*,mark options={solid,fill=black,draw=black},forget plot]
  table[row sep=crcr]{%
nan	nan\\
};
\addplot [color=black,mark size=0.5pt,only marks,mark=*,mark options={solid,fill=black,draw=black},forget plot]
  table[row sep=crcr]{%
3	8.70958464256778e-14\\
};
\addplot [color=black,mark size=0.5pt,only marks,mark=*,mark options={solid,fill=black,draw=black},forget plot]
  table[row sep=crcr]{%
nan	nan\\
};
\addplot [color=black,mark size=0.5pt,only marks,mark=*,mark options={solid,fill=black,draw=black},forget plot]
  table[row sep=crcr]{%
nan	nan\\
};
\addplot [color=black,mark size=0.5pt,only marks,mark=*,mark options={solid,fill=black,draw=black},forget plot]
  table[row sep=crcr]{%
nan	nan\\
};
\addplot [color=black,mark size=0.5pt,only marks,mark=*,mark options={solid,fill=black,draw=black},forget plot]
  table[row sep=crcr]{%
7	0.0848860297109979\\
7	1.94517101460767\\
};
\addplot [color=black,mark size=0.5pt,only marks,mark=*,mark options={solid,fill=black,draw=black},forget plot]
  table[row sep=crcr]{%
8	5.34112461234623\\
};
\end{axis}
\end{tikzpicture}%%
 } \hfil
        \subfigure{%
    % \tikzsetnextfilename{#1}%
    % This file was created by matlab2tikz v0.4.7 running on MATLAB 8.5.
% Copyright (c) 2008--2014, Nico Schlömer <nico.schloemer@gmail.com>
% All rights reserved.
% Minimal pgfplots version: 1.3
% 
% The latest updates can be retrieved from
%   http://www.mathworks.com/matlabcentral/fileexchange/22022-matlab2tikz
% where you can also make suggestions and rate matlab2tikz.
% 
\begin{tikzpicture}

\begin{axis}[%
width=3.35cm,
height=1.6cm,
scale only axis,
separate axis lines,
every outer x axis line/.append style={white!15!black},
every x tick label/.append style={font=\color{white!15!black}},
xmin=0.5,
xmax=8.5,
xtick={1,2,3,4,5,6,7,8},
xticklabels={{PCA},{kPCA},{Varimax},{ICA},{SPCA},{SSPCA},{SPLOCS},{our}},
every outer y axis line/.append style={white!15!black},
every y tick label/.append style={font=\color{white!15!black}},
ymin=1.72295853920056,
ymax=29.8726154803725,
ylabel={$s_{\max}~[\text{mm}]$},
every x tick label/.append style={rotate=45, anchor=east, align=left, font=\tiny}
]
\addplot [color=blue,solid,forget plot]
  table[row sep=crcr]{%
1	4.21947764929941\\
1	11.4220974576402\\
};
\addplot [color=blue,solid,forget plot]
  table[row sep=crcr]{%
2	4.19817692128453\\
2	11.3327632955004\\
};
\addplot [color=blue,solid,forget plot]
  table[row sep=crcr]{%
3	4.41597864312842\\
3	11.0382080543337\\
};
\addplot [color=blue,solid,forget plot]
  table[row sep=crcr]{%
4	4.06169834294779\\
4	11.3226311165512\\
};
\addplot [color=blue,solid,forget plot]
  table[row sep=crcr]{%
5	4.17830770070483\\
5	9.3808480447854\\
};
\addplot [color=blue,solid,forget plot]
  table[row sep=crcr]{%
6	3.93495841190694\\
6	9.3998427654441\\
};
\addplot [color=blue,solid,forget plot]
  table[row sep=crcr]{%
7	4.43746046424502\\
7	11.4434702817392\\
};
\addplot [color=blue,solid,forget plot]
  table[row sep=crcr]{%
8	4.33191211580419\\
8	10.7315009915054\\
};
\addplot [color=blue,solid,line width=4.0pt,forget plot]
  table[row sep=crcr]{%
1	6.82888883210361\\
1	8.67254612840264\\
};
\addplot [color=blue,solid,line width=4.0pt,forget plot]
  table[row sep=crcr]{%
2	6.83420991796562\\
2	8.63945726947865\\
};
\addplot [color=blue,solid,line width=4.0pt,forget plot]
  table[row sep=crcr]{%
3	6.83066655049112\\
3	8.51864920412558\\
};
\addplot [color=blue,solid,line width=4.0pt,forget plot]
  table[row sep=crcr]{%
4	6.78704554847206\\
4	8.63830023109156\\
};
\addplot [color=blue,solid,line width=4.0pt,forget plot]
  table[row sep=crcr]{%
5	6.05433608538822\\
5	7.44259877194215\\
};
\addplot [color=blue,solid,line width=4.0pt,forget plot]
  table[row sep=crcr]{%
6	5.91447750644192\\
6	7.31640847790602\\
};
\addplot [color=blue,solid,line width=4.0pt,forget plot]
  table[row sep=crcr]{%
7	6.94549422227539\\
7	8.74694136450085\\
};
\addplot [color=blue,solid,line width=4.0pt,forget plot]
  table[row sep=crcr]{%
8	6.67625735148239\\
8	8.32603933641715\\
};
\addplot [color=red,solid,line width=1.0pt,forget plot]
  table[row sep=crcr]{%
0.75	7.68025533631348\\
1.25	7.68025533631348\\
};
\addplot [color=red,solid,line width=1.0pt,forget plot]
  table[row sep=crcr]{%
1.75	7.74918625934548\\
2.25	7.74918625934548\\
};
\addplot [color=red,solid,line width=1.0pt,forget plot]
  table[row sep=crcr]{%
2.75	7.69711543406759\\
3.25	7.69711543406759\\
};
\addplot [color=red,solid,line width=1.0pt,forget plot]
  table[row sep=crcr]{%
3.75	7.71305033324182\\
4.25	7.71305033324182\\
};
\addplot [color=red,solid,line width=1.0pt,forget plot]
  table[row sep=crcr]{%
4.75	6.69174582358179\\
5.25	6.69174582358179\\
};
\addplot [color=red,solid,line width=1.0pt,forget plot]
  table[row sep=crcr]{%
5.75	6.60942997561947\\
6.25	6.60942997561947\\
};
\addplot [color=red,solid,line width=1.0pt,forget plot]
  table[row sep=crcr]{%
6.75	7.8316490452718\\
7.25	7.8316490452718\\
};
\addplot [color=red,solid,line width=1.0pt,forget plot]
  table[row sep=crcr]{%
7.75	7.48604724664141\\
8.25	7.48604724664141\\
};
\addplot [color=black,mark size=0.5pt,only marks,mark=*,mark options={solid,fill=black,draw=black},forget plot]
  table[row sep=crcr]{%
1	3.00248840016292\\
1	3.98784078202974\\
1	11.4685246861344\\
1	11.5613459845257\\
1	11.5887094694494\\
1	11.5985258828704\\
1	11.6556983916887\\
1	11.7543582563204\\
1	12.3244061370162\\
1	12.4395397481343\\
1	12.4962253856488\\
1	12.5341866935799\\
1	12.538233461871\\
1	12.7067799034227\\
1	13.0690709881822\\
1	13.1674055851914\\
1	13.3269219410583\\
1	13.8901315258154\\
1	15.4073590633657\\
1	28.5930856194102\\
};
\addplot [color=black,mark size=0.5pt,only marks,mark=*,mark options={solid,fill=black,draw=black},forget plot]
  table[row sep=crcr]{%
2	11.4479906528872\\
2	11.6193910775525\\
2	11.6309423987935\\
2	11.7031718185096\\
2	11.8198008358459\\
2	11.8522406662437\\
2	12.0018732957821\\
2	12.477305401674\\
2	12.8442656283975\\
2	13.3744169440639\\
2	13.6633407248883\\
2	13.7475579041745\\
2	14.06597085588\\
2	14.5743607019246\\
2	14.7100426677707\\
2	15.2177835342632\\
};
\addplot [color=black,mark size=0.5pt,only marks,mark=*,mark options={solid,fill=black,draw=black},forget plot]
  table[row sep=crcr]{%
3	3.4846023903706\\
3	3.53941071257036\\
3	3.80424833919754\\
3	4.09154487283732\\
3	11.0521261108607\\
3	11.0695347531877\\
3	11.1900073636038\\
3	11.4639901350778\\
3	11.5832178337937\\
3	11.6087775048094\\
3	11.9574336284391\\
3	11.9587507869115\\
3	11.9628806901227\\
3	12.0132868915842\\
3	12.1943344046603\\
3	12.2105511105161\\
3	12.2167015502568\\
3	12.2353809056642\\
3	12.3486165973295\\
3	12.4255903867195\\
3	12.5135292832472\\
3	12.6759108129271\\
3	12.9468947025485\\
3	13.0072644930125\\
3	13.5548100229987\\
3	13.7660266694126\\
3	15.7273811061605\\
};
\addplot [color=black,mark size=0.5pt,only marks,mark=*,mark options={solid,fill=black,draw=black},forget plot]
  table[row sep=crcr]{%
4	3.47669433226667\\
4	11.4371157005369\\
4	11.4962283895974\\
4	11.8096117530214\\
4	12.10885904375\\
4	12.4492077325717\\
4	12.6646798396603\\
4	12.9040709260385\\
4	13.1178979469795\\
4	14.0141498910694\\
4	14.1116435282035\\
4	14.2678233008784\\
4	15.2363163689662\\
4	15.4939067082125\\
4	15.5723171641975\\
4	15.6109083620022\\
4	15.8517291946422\\
4	19.4316684455299\\
};
\addplot [color=black,mark size=0.5pt,only marks,mark=*,mark options={solid,fill=black,draw=black},forget plot]
  table[row sep=crcr]{%
5	3.90053076486407\\
5	9.73524409191426\\
5	9.8154506493392\\
5	10.0079620153031\\
5	10.0700743559656\\
5	10.0919482822485\\
5	10.1153260118048\\
5	10.246395109949\\
5	10.6076170787037\\
5	10.8075429686182\\
5	10.9537353052967\\
5	11.215373564066\\
5	11.2605304608454\\
5	11.8961493834459\\
5	13.2260566806671\\
};
\addplot [color=black,mark size=0.5pt,only marks,mark=*,mark options={solid,fill=black,draw=black},forget plot]
  table[row sep=crcr]{%
6	3.56202608290803\\
6	9.51043039534377\\
6	9.59648810179751\\
6	9.74812261658999\\
6	10.0253808072352\\
6	10.2047018055693\\
6	10.2669754462955\\
6	10.2760840710431\\
6	10.4677293674444\\
6	10.5108420815156\\
6	10.5884824082392\\
6	11.5185506620292\\
6	13.3499681798537\\
6	13.9992025132902\\
6	14.9916586584185\\
};
\addplot [color=black,mark size=0.5pt,only marks,mark=*,mark options={solid,fill=black,draw=black},forget plot]
  table[row sep=crcr]{%
7	11.4720922469671\\
7	11.4738408930575\\
7	11.5450152273213\\
7	11.62167737505\\
7	11.6895854563648\\
7	11.7168448474095\\
7	11.7266956532567\\
7	11.7569265741879\\
7	11.7817482425394\\
7	11.9461160250753\\
7	11.9561547611273\\
7	12.0440980521181\\
7	12.0581049855945\\
7	12.0911212060068\\
7	12.1927339025343\\
7	12.3813636583751\\
7	13.4254802655743\\
7	13.6694917652973\\
7	13.7903912789349\\
7	14.9832781397199\\
7	15.0637680089115\\
7	18.176004603685\\
};
\addplot [color=black,mark size=0.5pt,only marks,mark=*,mark options={solid,fill=black,draw=black},forget plot]
  table[row sep=crcr]{%
8	10.9438220985725\\
8	10.9712762269262\\
8	10.9842573370228\\
8	10.9941854673194\\
8	11.0887142146244\\
8	11.1672118093463\\
8	11.2831448159793\\
8	11.4371488238054\\
8	11.5901277147874\\
8	11.6420546560299\\
8	11.6515115282454\\
8	11.8026692797082\\
8	11.8037566170857\\
8	11.8737952694738\\
8	12.1077432980617\\
8	12.3104072803095\\
8	12.3783915724532\\
8	12.458302858322\\
8	12.6822239118558\\
8	12.8124307658858\\
8	13.7043639050768\\
8	13.7990378527258\\
8	14.4482759397357\\
8	15.6214999917707\\
8	15.8645156293931\\
8	16.6085288396959\\
8	18.7770871785936\\
8	18.9750408728935\\
8	19.5357756035055\\
};
\end{axis}
\end{tikzpicture}%%
 } \hfil
        \subfigure{%
    % \tikzsetnextfilename{#1}%
    % This file was created by matlab2tikz v0.4.7 running on MATLAB 8.5.
% Copyright (c) 2008--2014, Nico Schlömer <nico.schloemer@gmail.com>
% All rights reserved.
% Minimal pgfplots version: 1.3
% 
% The latest updates can be retrieved from
%   http://www.mathworks.com/matlabcentral/fileexchange/22022-matlab2tikz
% where you can also make suggestions and rate matlab2tikz.
% 
\begin{tikzpicture}

\begin{axis}[%
width=3.35cm,
height=1.6cm,
unbounded coords=jump,
scale only axis,
separate axis lines,
every outer x axis line/.append style={white!15!black},
every x tick label/.append style={font=\color{white!15!black}},
xmin=0.5,
xmax=8.5,
xtick={1,2,3,4,5,6,7,8},
xticklabels={{PCA},{kPCA},{Varimax},{ICA},{SPCA},{SSPCA},{SPLOCS},{our}},
every outer y axis line/.append style={white!15!black},
every y tick label/.append style={font=\color{white!15!black}},
ymin=2.32499193318709,
ymax=9.99692018565619,
ylabel={$g_{\max}~[\text{mm}]$},
every x tick label/.append style={rotate=45, anchor=east, align=left, font=\tiny}
]
\addplot [color=blue,solid,forget plot]
  table[row sep=crcr]{%
1	4.25574351873048\\
1	8.24661610153002\\
};
\addplot [color=blue,solid,forget plot]
  table[row sep=crcr]{%
2	2.89721051759932\\
2	5.04129861837353\\
};
\addplot [color=blue,solid,forget plot]
  table[row sep=crcr]{%
3	4.25574351873049\\
3	8.24661610153002\\
};
\addplot [color=blue,solid,forget plot]
  table[row sep=crcr]{%
4	4.25574351873048\\
4	8.24661610153002\\
};
\addplot [color=blue,solid,forget plot]
  table[row sep=crcr]{%
5	2.87710876159548\\
5	5.07947506288235\\
};
\addplot [color=blue,solid,forget plot]
  table[row sep=crcr]{%
6	2.76043450498367\\
6	7.08464219604579\\
};
\addplot [color=blue,solid,forget plot]
  table[row sep=crcr]{%
7	3.7468673293346\\
7	7.46283766012515\\
};
\addplot [color=blue,solid,forget plot]
  table[row sep=crcr]{%
8	3.1149474043138\\
8	5.69334433323908\\
};
\addplot [color=blue,solid,line width=4.0pt,forget plot]
  table[row sep=crcr]{%
1	5.13704135741573\\
1	7.30108111694367\\
};
\addplot [color=blue,solid,line width=4.0pt,forget plot]
  table[row sep=crcr]{%
2	3.04095627409136\\
2	4.37445658177944\\
};
\addplot [color=blue,solid,line width=4.0pt,forget plot]
  table[row sep=crcr]{%
3	5.13704135741574\\
3	7.30108111694368\\
};
\addplot [color=blue,solid,line width=4.0pt,forget plot]
  table[row sep=crcr]{%
4	5.13704135741575\\
4	7.30108111694366\\
};
\addplot [color=blue,solid,line width=4.0pt,forget plot]
  table[row sep=crcr]{%
5	3.06725228800655\\
5	3.87689873616962\\
};
\addplot [color=blue,solid,line width=4.0pt,forget plot]
  table[row sep=crcr]{%
6	3.72532456876926\\
6	5.34170970277842\\
};
\addplot [color=blue,solid,line width=4.0pt,forget plot]
  table[row sep=crcr]{%
7	4.62244024620529\\
7	6.23315626890476\\
};
\addplot [color=blue,solid,line width=4.0pt,forget plot]
  table[row sep=crcr]{%
8	3.36429465593034\\
8	5.01506478928366\\
};
\addplot [color=red,solid,line width=1.0pt,forget plot]
  table[row sep=crcr]{%
0.75	6.14076961366918\\
1.25	6.14076961366918\\
};
\addplot [color=red,solid,line width=1.0pt,forget plot]
  table[row sep=crcr]{%
1.75	3.21427997397137\\
2.25	3.21427997397137\\
};
\addplot [color=red,solid,line width=1.0pt,forget plot]
  table[row sep=crcr]{%
2.75	6.14076961366919\\
3.25	6.14076961366919\\
};
\addplot [color=red,solid,line width=1.0pt,forget plot]
  table[row sep=crcr]{%
3.75	6.14076961366919\\
4.25	6.14076961366919\\
};
\addplot [color=red,solid,line width=1.0pt,forget plot]
  table[row sep=crcr]{%
4.75	3.36052942736974\\
5.25	3.36052942736974\\
};
\addplot [color=red,solid,line width=1.0pt,forget plot]
  table[row sep=crcr]{%
5.75	4.89870333149503\\
6.25	4.89870333149503\\
};
\addplot [color=red,solid,line width=1.0pt,forget plot]
  table[row sep=crcr]{%
6.75	5.27733212978877\\
7.25	5.27733212978877\\
};
\addplot [color=red,solid,line width=1.0pt,forget plot]
  table[row sep=crcr]{%
7.75	3.60041883250098\\
8.25	3.60041883250098\\
};
\addplot [color=black,mark size=0.5pt,only marks,mark=*,mark options={solid,fill=black,draw=black},forget plot]
  table[row sep=crcr]{%
nan	nan\\
};
\addplot [color=black,mark size=0.5pt,only marks,mark=*,mark options={solid,fill=black,draw=black},forget plot]
  table[row sep=crcr]{%
nan	nan\\
};
\addplot [color=black,mark size=0.5pt,only marks,mark=*,mark options={solid,fill=black,draw=black},forget plot]
  table[row sep=crcr]{%
nan	nan\\
};
\addplot [color=black,mark size=0.5pt,only marks,mark=*,mark options={solid,fill=black,draw=black},forget plot]
  table[row sep=crcr]{%
nan	nan\\
};
\addplot [color=black,mark size=0.5pt,only marks,mark=*,mark options={solid,fill=black,draw=black},forget plot]
  table[row sep=crcr]{%
5	6.59234093595512\\
};
\addplot [color=black,mark size=0.5pt,only marks,mark=*,mark options={solid,fill=black,draw=black},forget plot]
  table[row sep=crcr]{%
nan	nan\\
};
\addplot [color=black,mark size=0.5pt,only marks,mark=*,mark options={solid,fill=black,draw=black},forget plot]
  table[row sep=crcr]{%
7	9.64819617418032\\
};
\addplot [color=black,mark size=0.5pt,only marks,mark=*,mark options={solid,fill=black,draw=black},forget plot]
  table[row sep=crcr]{%
nan	nan\\
};
\end{axis}
\end{tikzpicture}%%
 } \hfil
        \subfigure{%
    % \tikzsetnextfilename{#1}%
    % This file was created by matlab2tikz v0.4.7 running on MATLAB 8.5.
% Copyright (c) 2008--2014, Nico Schlömer <nico.schloemer@gmail.com>
% All rights reserved.
% Minimal pgfplots version: 1.3
% 
% The latest updates can be retrieved from
%   http://www.mathworks.com/matlabcentral/fileexchange/22022-matlab2tikz
% where you can also make suggestions and rate matlab2tikz.
% 
\begin{tikzpicture}

\begin{axis}[%
width=3.35cm,
height=1.6cm,
unbounded coords=jump,
scale only axis,
separate axis lines,
every outer x axis line/.append style={white!15!black},
every x tick label/.append style={font=\color{white!15!black}},
xmin=0.5,
xmax=8.5,
xtick={1,2,3,4,5,6,7,8},
xticklabels={{PCA},{kPCA},{Varimax},{ICA},{SPCA},{SSPCA},{SPLOCS},{our}},
every outer y axis line/.append style={white!15!black},
every y tick label/.append style={font=\color{white!15!black}},
ymin=1.88067163867391,
ymax=53.7104327475409,
ylabel={$g_{\text{max}}^{0.05}~[\text{mm}]$},
every x tick label/.append style={rotate=45, anchor=east, align=left, font=\tiny}
]
\addplot [color=blue,solid,forget plot]
  table[row sep=crcr]{%
1	4.42582782840038\\
1	8.430645786412\\
};
\addplot [color=blue,solid,forget plot]
  table[row sep=crcr]{%
2	4.57144895064202\\
2	9.54677738961727\\
};
\addplot [color=blue,solid,forget plot]
  table[row sep=crcr]{%
3	4.42582782840038\\
3	8.43064578641201\\
};
\addplot [color=blue,solid,forget plot]
  table[row sep=crcr]{%
4	4.42582782840038\\
4	8.430645786412\\
};
\addplot [color=blue,solid,forget plot]
  table[row sep=crcr]{%
5	6.58042659207745\\
5	50.6388489853765\\
};
\addplot [color=blue,solid,forget plot]
  table[row sep=crcr]{%
6	5.01068742493945\\
6	51.3545345153196\\
};
\addplot [color=blue,solid,forget plot]
  table[row sep=crcr]{%
7	4.34890272624497\\
7	8.42713548914266\\
};
\addplot [color=blue,solid,forget plot]
  table[row sep=crcr]{%
8	4.23656987089514\\
8	8.20288641063637\\
};
\addplot [color=blue,solid,line width=4.0pt,forget plot]
  table[row sep=crcr]{%
1	5.30311204490036\\
1	7.67371435915983\\
};
\addplot [color=blue,solid,line width=4.0pt,forget plot]
  table[row sep=crcr]{%
2	5.1808574060051\\
2	7.82599727042616\\
};
\addplot [color=blue,solid,line width=4.0pt,forget plot]
  table[row sep=crcr]{%
3	5.30311204490035\\
3	7.67371435915983\\
};
\addplot [color=blue,solid,line width=4.0pt,forget plot]
  table[row sep=crcr]{%
4	5.30311204490035\\
4	7.67371435915984\\
};
\addplot [color=blue,solid,line width=4.0pt,forget plot]
  table[row sep=crcr]{%
5	10.2950334599346\\
5	33.4228043388408\\
};
\addplot [color=blue,solid,line width=4.0pt,forget plot]
  table[row sep=crcr]{%
6	10.2891867265694\\
6	27.1983359826462\\
};
\addplot [color=blue,solid,line width=4.0pt,forget plot]
  table[row sep=crcr]{%
7	5.41181639584691\\
7	7.72372952352475\\
};
\addplot [color=blue,solid,line width=4.0pt,forget plot]
  table[row sep=crcr]{%
8	4.85301212901969\\
8	7.35537478947997\\
};
\addplot [color=red,solid,line width=1.0pt,forget plot]
  table[row sep=crcr]{%
0.75	6.3866473900354\\
1.25	6.3866473900354\\
};
\addplot [color=red,solid,line width=1.0pt,forget plot]
  table[row sep=crcr]{%
1.75	6.22767712150524\\
2.25	6.22767712150524\\
};
\addplot [color=red,solid,line width=1.0pt,forget plot]
  table[row sep=crcr]{%
2.75	6.3866473900354\\
3.25	6.3866473900354\\
};
\addplot [color=red,solid,line width=1.0pt,forget plot]
  table[row sep=crcr]{%
3.75	6.3866473900354\\
4.25	6.3866473900354\\
};
\addplot [color=red,solid,line width=1.0pt,forget plot]
  table[row sep=crcr]{%
4.75	16.598526927864\\
5.25	16.598526927864\\
};
\addplot [color=red,solid,line width=1.0pt,forget plot]
  table[row sep=crcr]{%
5.75	20.4817561972355\\
6.25	20.4817561972355\\
};
\addplot [color=red,solid,line width=1.0pt,forget plot]
  table[row sep=crcr]{%
6.75	6.67484147197303\\
7.25	6.67484147197303\\
};
\addplot [color=red,solid,line width=1.0pt,forget plot]
  table[row sep=crcr]{%
7.75	6.15518560424921\\
8.25	6.15518560424921\\
};
\addplot [color=black,mark size=0.5pt,only marks,mark=*,mark options={solid,fill=black,draw=black},forget plot]
  table[row sep=crcr]{%
nan	nan\\
};
\addplot [color=black,mark size=0.5pt,only marks,mark=*,mark options={solid,fill=black,draw=black},forget plot]
  table[row sep=crcr]{%
nan	nan\\
};
\addplot [color=black,mark size=0.5pt,only marks,mark=*,mark options={solid,fill=black,draw=black},forget plot]
  table[row sep=crcr]{%
nan	nan\\
};
\addplot [color=black,mark size=0.5pt,only marks,mark=*,mark options={solid,fill=black,draw=black},forget plot]
  table[row sep=crcr]{%
nan	nan\\
};
\addplot [color=black,mark size=0.5pt,only marks,mark=*,mark options={solid,fill=black,draw=black},forget plot]
  table[row sep=crcr]{%
nan	nan\\
};
\addplot [color=black,mark size=0.5pt,only marks,mark=*,mark options={solid,fill=black,draw=black},forget plot]
  table[row sep=crcr]{%
nan	nan\\
};
\addplot [color=black,mark size=0.5pt,only marks,mark=*,mark options={solid,fill=black,draw=black},forget plot]
  table[row sep=crcr]{%
nan	nan\\
};
\addplot [color=black,mark size=0.5pt,only marks,mark=*,mark options={solid,fill=black,draw=black},forget plot]
  table[row sep=crcr]{%
nan	nan\\
};
\end{axis}
\end{tikzpicture}%%
}
     }
     \vspace{-5mm}
    \caption{Boxplots of the quantitative measures for the brain shapes dataset. In each plot, the horizontal axis shows the individual methods and the vertical axis the error scores described in section \ref{quanmeas}.
    Compared to SPLOCS, which is the only other method explicitly striving for local support deformation factors, our method has a smaller generalisation error and sparse reconstruction error. The sparse but not spatially localised factors obtained by SPCA and SSPCA (cf.~Fig.~\ref{brainDefFactors}) result in a large maximum sparse reconstruction error (bottom right).
    } 
    \label{brain-boxplots}
\end{figure*} 

\subsubsection{Dealing with the Small Training Set}\label{kernelisation}
For PCA, Varimax and ICA the number of factors cannot exceed the rank of $\matX$, which is at most $K{-}1$ for $K{<}3N$. 
For the used matrix factorisation framework, setting $M$ to a value larger than the expected rank of the solution but smaller than full rank has empirically led to good results \cite{Haeffele:2014wj}. However, since our expected rank is $M=96$ and the full rank is at most $K{-}1=16$, this is not possible. 

We compensate the insufficient amount of training data by assuming smoothness of the deformations.
Based on concepts introduced in \cite{Cootes:1995uz,Wang:1998bw,Wang:2000iw}, instead of the data matrix $\matX$, we factorise the \emph{kernelised covariance} matrix $\matK$. The standard PCA method finds the $M$ most dominant eigenvectors of the covariance matrix $\matC$ by the (exact) factorisation $\matC = \matPhi \diag(\lambda_1, \ldots, \lambda_{K-1}) \matPhi^T$. The kernelised covariance $\matK$ allows to incorporate additional elasticity into the resulting deformation model. For example, $\matK {=} \matI_{3N}$ results in independent vertex movements \cite{Wang:2000iw}. A more interesting approach is to combine the (scaled) covariance matrix with a smooth vertex deformation. We define $\matK = \frac{1}{\|\VEC(\matX\matX^T)\|_\infty}\matX\matX^T + \matK_{\text{euc}}$, with $\matK_{\text{euc}} = \matI_3 \otimes \matK_{\text{euc}}' \in \R^{3N \times 3N}$. Using the bandwidth $\beta$, $\matK_{\text{euc}}'$ is given by
\begin{align}
   (\matK_{\text{euc}}')_{ij} = \exp(-(\frac{(\matD_{\text{euc}})_{ij}}{2 \beta \| \VEC(\matD_{\text{euc}})\|_\infty})^2) \,.
 \end{align} 
Setting $\matPhi$ to the $M$ most dominant eigenvectors of the symmetric and positive semi-definite matrix $\matK$ gives the solution of kPCA \cite{Bishop:2006ug}. In terms of our proposed matrix factorisation problem in eq.~\eqref{optProb}, we find a factorisation $\matPhi\tilde\matA$ of $\matK$, instead of factorising the data $\matX$. Since the regularisation term remains unchanged, the factor matrix $\matPhi {\in} \R^{3N \times M}$ is still sparse and smooth (due to $\| \cdot \|_\Phi$). Moreover, due to the nuclear norm term being contained in the product $\|\cdot\|_\Phi \|\cdot\|_A$ (cf.~section \ref{theomot}), the resulting factorisation $\matPhi\tilde\matA$ is steered towards being low-rank, in favour of the elaborations in section \ref{theomot}. However, the resulting matrix $\tilde\matA {\in} \R^{M \times 3N}$ now contains the weights for approximating $\matK$ by a linear combination of the columns of $\matPhi$, rather than approximating the data matrix $\matX$ by a linear combination of the columns of $\matPhi$. For the known $\matPhi$, the weights that best approximate the \emph{data matrix} $\matX$ are found by solving the linear system $\matPhi \matA {=} \matX$ in the least-squares sense for $\matA {\in} \R^{M \times K}$.

\subsubsection{Results}
The magnitude of the deformation of the first three factors are shown in Fig.~\ref{brainDefFactors}, where it can be seen that only SPCA, SSPCA, SPLOCS and our method obtain \emph{sparse} deformation factors. The SPCA and SSPCA methods do not incorporate the spatial relation between vertices and as such the deformations are not spatially localised (see red arrows in Fig.~\ref{brainDefFactors}, where more than a single region is active). The factors obtained by SPLOCS are non-smooth and do not exhibit local support, in contrast to our method, where smooth deformation factors with local support are obtained.

The quantitative results presented in Fig.~\ref{brain-boxplots} reveal that our method has a larger reconstruction error. This can be explained by the fact that due to the sparsity and smoothness of the deformation factors a very large number of basis vectors is required in order to exactly span the subspace of the training data. Instead, our method finds a simple (sparse and smooth) representation that explains the data approximately, in favour of Occam's razor. The average reconstruction error is around $1$mm, which is low considering that the brain structures span approximately $6$cm from left to right. Considering specificity, all methods are comparable. PCA, Varimax and ICA, which have the lowest reconstruction errors, have the highest generalisation errors, which underlines that these methods overfit the training data. The kPCA method is able to overcome this issue due to the smoothness assumption. SPCA and SSPCA have good generalisation scores but at the same time a very high maximum reconstruction error. Our method and SPLOCS are the only ones that explicitly strive for local support factors. Since our method outperforms SPLOCS with respect to generalisation and sparse reconstruction error, we claim that our method outperforms the state of the art.

\subsection{Human Body Shapes}
\begin{figure*}[ht!]
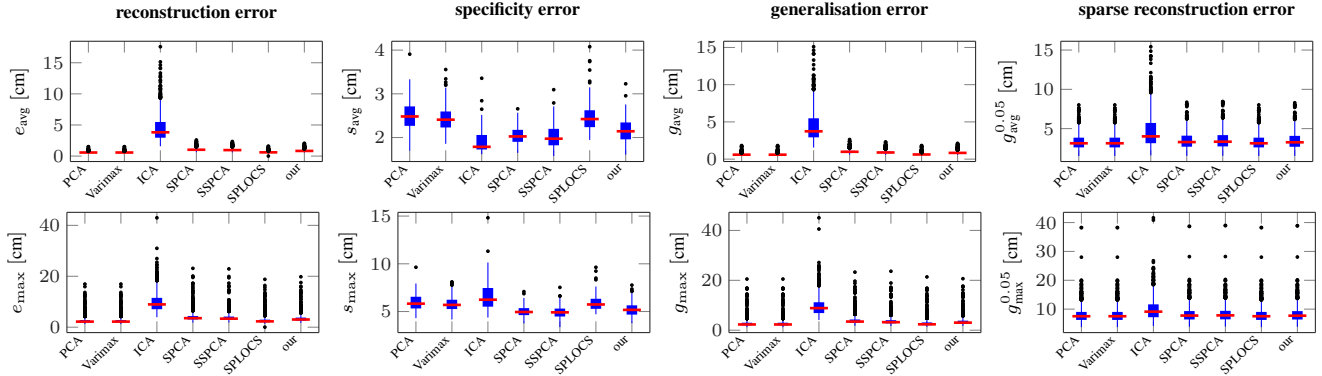

\vspace{-4mm}
     \centerline{
        \subfigure{%
    % \tikzsetnextfilename{#1}%
    \input{boxplot-body-pdmReconstructionMeanError.tikz}%
 } \hfil
        \subfigure{%
    % \tikzsetnextfilename{#1}%
    % This file was created by matlab2tikz v0.4.7 running on MATLAB 8.5.
% Copyright (c) 2008--2014, Nico Schlömer <nico.schloemer@gmail.com>
% All rights reserved.
% Minimal pgfplots version: 1.3
% 
% The latest updates can be retrieved from
%   http://www.mathworks.com/matlabcentral/fileexchange/22022-matlab2tikz
% where you can also make suggestions and rate matlab2tikz.
% 
\begin{tikzpicture}

\begin{axis}[%
width=3.35cm,
height=1.6cm,
scale only axis,
separate axis lines,
every outer x axis line/.append style={white!15!black},
every x tick label/.append style={font=\color{white!15!black}},
xmin=0.5,
xmax=7.5,
xtick={1,2,3,4,5,6,7},
xticklabels={{PCA},{Varimax},{ICA},{SPCA},{SSPCA},{SPLOCS},{our}},
every outer y axis line/.append style={white!15!black},
every y tick label/.append style={font=\color{white!15!black}},
ymin=1.45116950202736,
ymax=4.20237757735896,
ylabel={$s_{\text{avg}}~[\text{cm}]$},
title style={font=\bfseries},
title={specificity error},
every x tick label/.append style={rotate=45, anchor=east, align=left, font=\tiny}
]
\addplot [color=blue,solid,forget plot]
  table[row sep=crcr]{%
1	1.69922613980801\\
1	3.33153965978852\\
};
\addplot [color=blue,solid,forget plot]
  table[row sep=crcr]{%
2	1.85579991929912\\
2	3.14074568229829\\
};
\addplot [color=blue,solid,forget plot]
  table[row sep=crcr]{%
3	1.6188162113583\\
3	2.51858042160891\\
};
\addplot [color=blue,solid,forget plot]
  table[row sep=crcr]{%
4	1.65039128792935\\
4	2.5677345903623\\
};
\addplot [color=blue,solid,forget plot]
  table[row sep=crcr]{%
5	1.57622441454243\\
5	2.7115877968494\\
};
\addplot [color=blue,solid,forget plot]
  table[row sep=crcr]{%
6	1.94472267622454\\
6	3.14810322380093\\
};
\addplot [color=blue,solid,forget plot]
  table[row sep=crcr]{%
7	1.60923642749489\\
7	2.75664346501626\\
};
\addplot [color=blue,solid,line width=4.0pt,forget plot]
  table[row sep=crcr]{%
1	2.26739670052449\\
1	2.7122342038476\\
};
\addplot [color=blue,solid,line width=4.0pt,forget plot]
  table[row sep=crcr]{%
2	2.23028963198384\\
2	2.59582464900629\\
};
\addplot [color=blue,solid,line width=4.0pt,forget plot]
  table[row sep=crcr]{%
3	1.72657962274206\\
3	2.05853181297065\\
};
\addplot [color=blue,solid,line width=4.0pt,forget plot]
  table[row sep=crcr]{%
4	1.90167523337433\\
4	2.17568867720273\\
};
\addplot [color=blue,solid,line width=4.0pt,forget plot]
  table[row sep=crcr]{%
5	1.82228163574253\\
5	2.19851352677092\\
};
\addplot [color=blue,solid,line width=4.0pt,forget plot]
  table[row sep=crcr]{%
6	2.23768338738557\\
6	2.62373947518695\\
};
\addplot [color=blue,solid,line width=4.0pt,forget plot]
  table[row sep=crcr]{%
7	1.95874579035064\\
7	2.34478201995346\\
};
\addplot [color=red,solid,line width=1.0pt,forget plot]
  table[row sep=crcr]{%
0.75	2.48173792251049\\
1.25	2.48173792251049\\
};
\addplot [color=red,solid,line width=1.0pt,forget plot]
  table[row sep=crcr]{%
1.75	2.40822842417433\\
2.25	2.40822842417433\\
};
\addplot [color=red,solid,line width=1.0pt,forget plot]
  table[row sep=crcr]{%
2.75	1.78730624875241\\
3.25	1.78730624875241\\
};
\addplot [color=red,solid,line width=1.0pt,forget plot]
  table[row sep=crcr]{%
3.75	2.02521693076647\\
4.25	2.02521693076647\\
};
\addplot [color=red,solid,line width=1.0pt,forget plot]
  table[row sep=crcr]{%
4.75	1.97341485010004\\
5.25	1.97341485010004\\
};
\addplot [color=red,solid,line width=1.0pt,forget plot]
  table[row sep=crcr]{%
5.75	2.42243250167419\\
6.25	2.42243250167419\\
};
\addplot [color=red,solid,line width=1.0pt,forget plot]
  table[row sep=crcr]{%
6.75	2.14319886883715\\
7.25	2.14319886883715\\
};
\addplot [color=black,mark size=0.5pt,only marks,mark=*,mark options={solid,fill=black,draw=black},forget plot]
  table[row sep=crcr]{%
1	3.9070949654517\\
};
\addplot [color=black,mark size=0.5pt,only marks,mark=*,mark options={solid,fill=black,draw=black},forget plot]
  table[row sep=crcr]{%
2	3.19785772907502\\
2	3.21739204417464\\
2	3.26752896996035\\
2	3.34188017005798\\
2	3.55575072314439\\
};
\addplot [color=black,mark size=0.5pt,only marks,mark=*,mark options={solid,fill=black,draw=black},forget plot]
  table[row sep=crcr]{%
3	2.647667699936\\
3	2.84173258992089\\
3	3.35618966657144\\
};
\addplot [color=black,mark size=0.5pt,only marks,mark=*,mark options={solid,fill=black,draw=black},forget plot]
  table[row sep=crcr]{%
4	2.65649455424237\\
};
\addplot [color=black,mark size=0.5pt,only marks,mark=*,mark options={solid,fill=black,draw=black},forget plot]
  table[row sep=crcr]{%
5	2.79150400774309\\
5	3.09443387984667\\
};
\addplot [color=black,mark size=0.5pt,only marks,mark=*,mark options={solid,fill=black,draw=black},forget plot]
  table[row sep=crcr]{%
6	3.25371947758119\\
6	3.28868154531457\\
6	3.54334188639927\\
6	3.73836464455823\\
6	3.7570587669188\\
6	4.07732266484389\\
};
\addplot [color=black,mark size=0.5pt,only marks,mark=*,mark options={solid,fill=black,draw=black},forget plot]
  table[row sep=crcr]{%
7	2.95510256707136\\
7	3.2291106251962\\
};
\end{axis}
\end{tikzpicture}%%
 } \hfil
        \subfigure{%
    % \tikzsetnextfilename{#1}%
    \input{boxplot-body-generalisationMeanError.tikz}%
 } \hfil
        \subfigure{%
    % \tikzsetnextfilename{#1}%
    \input{boxplot-body-meanErrorSparseFitting.tikz}%
}
     } 
     \vspace{-5mm}
     \centerline{
        \subfigure{%
    % \tikzsetnextfilename{#1}%
    \input{boxplot-body-pdmReconstructionMaxError.tikz}%
 } \hfil
        \subfigure{%
    % \tikzsetnextfilename{#1}%
    % This file was created by matlab2tikz v0.4.7 running on MATLAB 8.5.
% Copyright (c) 2008--2014, Nico Schlömer <nico.schloemer@gmail.com>
% All rights reserved.
% Minimal pgfplots version: 1.3
% 
% The latest updates can be retrieved from
%   http://www.mathworks.com/matlabcentral/fileexchange/22022-matlab2tikz
% where you can also make suggestions and rate matlab2tikz.
% 
\begin{tikzpicture}

\begin{axis}[%
width=3.35cm,
height=1.6cm,
scale only axis,
separate axis lines,
every outer x axis line/.append style={white!15!black},
every x tick label/.append style={font=\color{white!15!black}},
xmin=0.5,
xmax=7.5,
xtick={1,2,3,4,5,6,7},
xticklabels={{PCA},{Varimax},{ICA},{SPCA},{SSPCA},{SPLOCS},{our}},
every outer y axis line/.append style={white!15!black},
every y tick label/.append style={font=\color{white!15!black}},
ymin=2.79309648315672,
ymax=15.390354297186,
ylabel={$s_{\max}~[\text{cm}]$},
every x tick label/.append style={rotate=45, anchor=east, align=left, font=\tiny}
]
\addplot [color=blue,solid,forget plot]
  table[row sep=crcr]{%
1	4.30479416636364\\
1	7.93528299501422\\
};
\addplot [color=blue,solid,forget plot]
  table[row sep=crcr]{%
2	4.16984297155481\\
2	7.60602401652882\\
};
\addplot [color=blue,solid,forget plot]
  table[row sep=crcr]{%
3	4.36519998190626\\
3	10.137631999767\\
};
\addplot [color=blue,solid,forget plot]
  table[row sep=crcr]{%
4	3.76572250523721\\
4	6.42253481328886\\
};
\addplot [color=blue,solid,forget plot]
  table[row sep=crcr]{%
5	3.36569911106714\\
5	6.47260424471264\\
};
\addplot [color=blue,solid,forget plot]
  table[row sep=crcr]{%
6	4.71364258072073\\
6	7.62063556537264\\
};
\addplot [color=blue,solid,forget plot]
  table[row sep=crcr]{%
7	3.78944931584666\\
7	6.93531785095182\\
};
\addplot [color=blue,solid,line width=4.0pt,forget plot]
  table[row sep=crcr]{%
1	5.32354879356204\\
1	6.54690849836878\\
};
\addplot [color=blue,solid,line width=4.0pt,forget plot]
  table[row sep=crcr]{%
2	5.25914710549628\\
2	6.23621193990226\\
};
\addplot [color=blue,solid,line width=4.0pt,forget plot]
  table[row sep=crcr]{%
3	5.51216201280362\\
3	7.46541789773315\\
};
\addplot [color=blue,solid,line width=4.0pt,forget plot]
  table[row sep=crcr]{%
4	4.62504861399479\\
4	5.35601914792083\\
};
\addplot [color=blue,solid,line width=4.0pt,forget plot]
  table[row sep=crcr]{%
5	4.47920709671019\\
5	5.3092262637557\\
};
\addplot [color=blue,solid,line width=4.0pt,forget plot]
  table[row sep=crcr]{%
6	5.28332696365882\\
6	6.33650128449612\\
};
\addplot [color=blue,solid,line width=4.0pt,forget plot]
  table[row sep=crcr]{%
7	4.67867805289967\\
7	5.64905844348413\\
};
\addplot [color=red,solid,line width=1.0pt,forget plot]
  table[row sep=crcr]{%
0.75	5.82546959095411\\
1.25	5.82546959095411\\
};
\addplot [color=red,solid,line width=1.0pt,forget plot]
  table[row sep=crcr]{%
1.75	5.69369333815738\\
2.25	5.69369333815738\\
};
\addplot [color=red,solid,line width=1.0pt,forget plot]
  table[row sep=crcr]{%
2.75	6.24061031045541\\
3.25	6.24061031045541\\
};
\addplot [color=red,solid,line width=1.0pt,forget plot]
  table[row sep=crcr]{%
3.75	4.9502397615405\\
4.25	4.9502397615405\\
};
\addplot [color=red,solid,line width=1.0pt,forget plot]
  table[row sep=crcr]{%
4.75	4.91503000869147\\
5.25	4.91503000869147\\
};
\addplot [color=red,solid,line width=1.0pt,forget plot]
  table[row sep=crcr]{%
5.75	5.74034667795311\\
6.25	5.74034667795311\\
};
\addplot [color=red,solid,line width=1.0pt,forget plot]
  table[row sep=crcr]{%
6.75	5.17495491497787\\
7.25	5.17495491497787\\
};
\addplot [color=black,mark size=0.5pt,only marks,mark=*,mark options={solid,fill=black,draw=black},forget plot]
  table[row sep=crcr]{%
1	9.63836275077071\\
};
\addplot [color=black,mark size=0.5pt,only marks,mark=*,mark options={solid,fill=black,draw=black},forget plot]
  table[row sep=crcr]{%
2	7.75955269150287\\
2	7.83692940688833\\
2	7.91723947648058\\
2	8.06707083146468\\
2	8.10365795013798\\
};
\addplot [color=black,mark size=0.5pt,only marks,mark=*,mark options={solid,fill=black,draw=black},forget plot]
  table[row sep=crcr]{%
3	11.3171376005457\\
3	14.8177516692756\\
};
\addplot [color=black,mark size=0.5pt,only marks,mark=*,mark options={solid,fill=black,draw=black},forget plot]
  table[row sep=crcr]{%
4	6.86894526373073\\
4	7.08701452240076\\
};
\addplot [color=black,mark size=0.5pt,only marks,mark=*,mark options={solid,fill=black,draw=black},forget plot]
  table[row sep=crcr]{%
5	6.61595542574617\\
5	6.63680363901847\\
5	7.53049913439164\\
};
\addplot [color=black,mark size=0.5pt,only marks,mark=*,mark options={solid,fill=black,draw=black},forget plot]
  table[row sep=crcr]{%
6	8.32463645973721\\
6	8.49867448785763\\
6	9.233051410625\\
6	9.63606375979495\\
};
\addplot [color=black,mark size=0.5pt,only marks,mark=*,mark options={solid,fill=black,draw=black},forget plot]
  table[row sep=crcr]{%
7	7.11719575917047\\
7	7.15622734246805\\
7	7.35803455085558\\
7	7.76847895688258\\
};
\end{axis}
\end{tikzpicture}%%
 } \hfil
        \subfigure{%
    % \tikzsetnextfilename{#1}%
    \input{boxplot-body-generalisationMaxError.tikz}%
 } \hfil
        \subfigure{%
    % \tikzsetnextfilename{#1}%
    \input{boxplot-body-maxErrorSparseFitting.tikz}%
}
     }
     \vspace{-5mm}
    \caption{Boxplots of the quantitative measures in each column for the body shapes dataset. Quantitatively all methods have comparable performance, apart from ICA which performs worse.} 
    \label{body-boxplots}
\end{figure*}
\begin{figure}%
     \centerline{\includegraphics[scale=0.25]{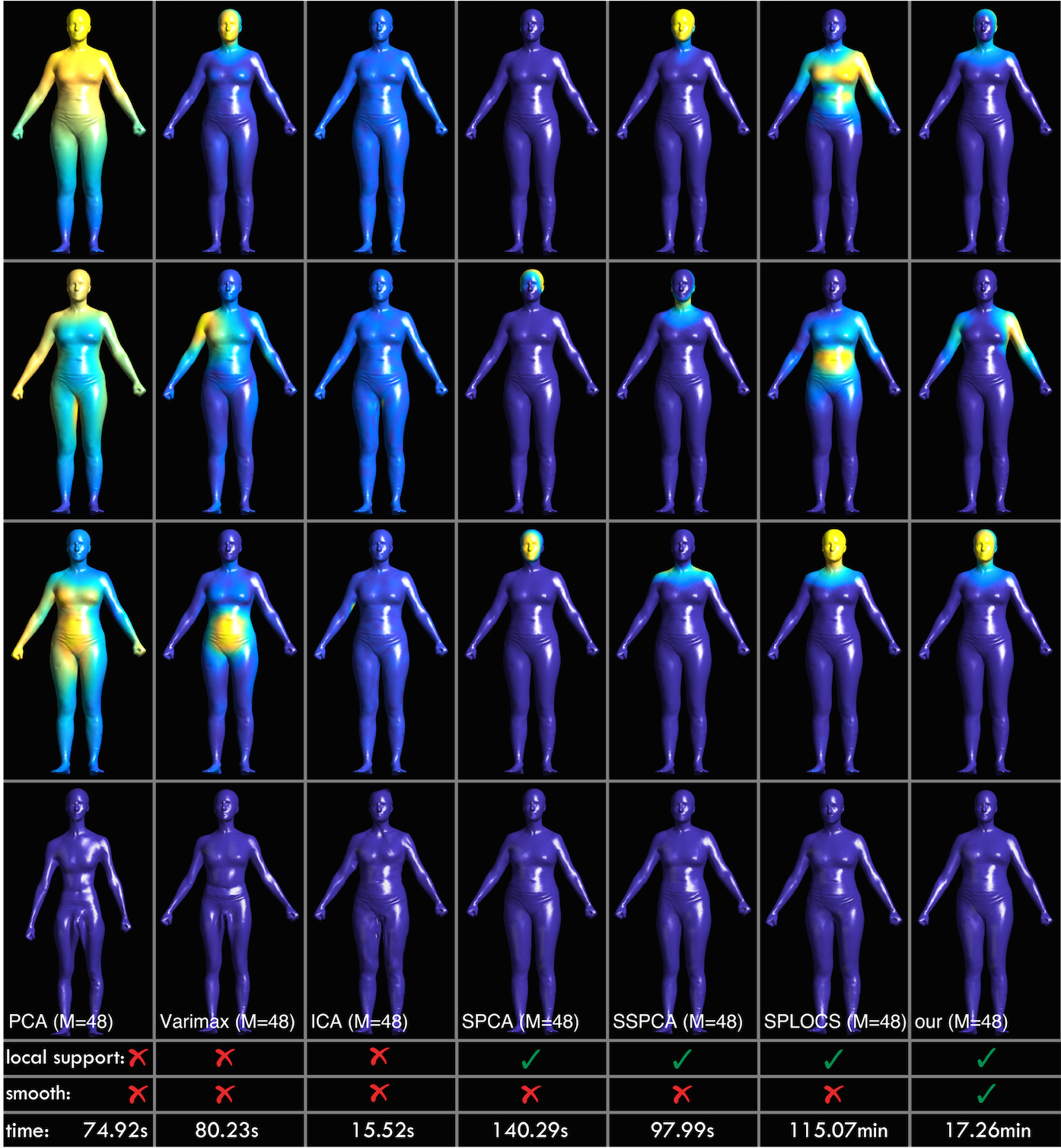}}
      \vspace{-2mm}
     \caption{The deformation magnitudes reveal that SPCA, SSPCA, SPLOCS and our method obtain local support factors (in the second factor of our method the connection is at the back). The bottom row depicts randomly drawn shapes, where only the methods with local support deformation factors result in plausible shapes.}
     \label{bodyDefFactors}
\end{figure}
Our second experiment is based on $1531$ female human body shapes \cite{Yang:2014tj}, where each shape comprises $12500$ vertices that are in correspondence among the training set. Due to the large number of training data and the high level of details in the meshes, we directly factorise the data matrix $\matX$.
The edge set $\mathcal{E}$ now contains the edges of the triangle mesh topology and the weights for edge $e=(i,j) \in \mathcal{E}$ are given by $\vecomega_e = \exp(- (\frac{(\matD_{\text{euc}})_{ij}}{\bar d})^2 )$, where $\bar d$ denotes the average vertex distance between neighbour vertices. Edges with weights lower than $\theta {=} 0.1$ are ignored.

\subsubsection{Results}
Quantitatively the evaluated methods have comparable performance, with the exception that ICA has worse overall performance (Fig.~\ref{body-boxplots}). The most noticeable difference between the methods is the specificity error, where SPCA, SSPCA and our method perform best. 
Fig.~\ref{bodyDefFactors} reveals that SPCA, SSPCA, SPLOCS and our method obtain factors with local support. Apparently, for large datasets, sparsity alone, as used in SPCA and SSPCA, is sufficient to obtain local support factors. However, our method is the only one that explicitly aims for smoothness of the factors, which leads to more realistic deformations, as shown in Fig.~\ref{perfactorDefs}. %

\begin{figure}%
     \centerline{\includegraphics[scale=0.28]{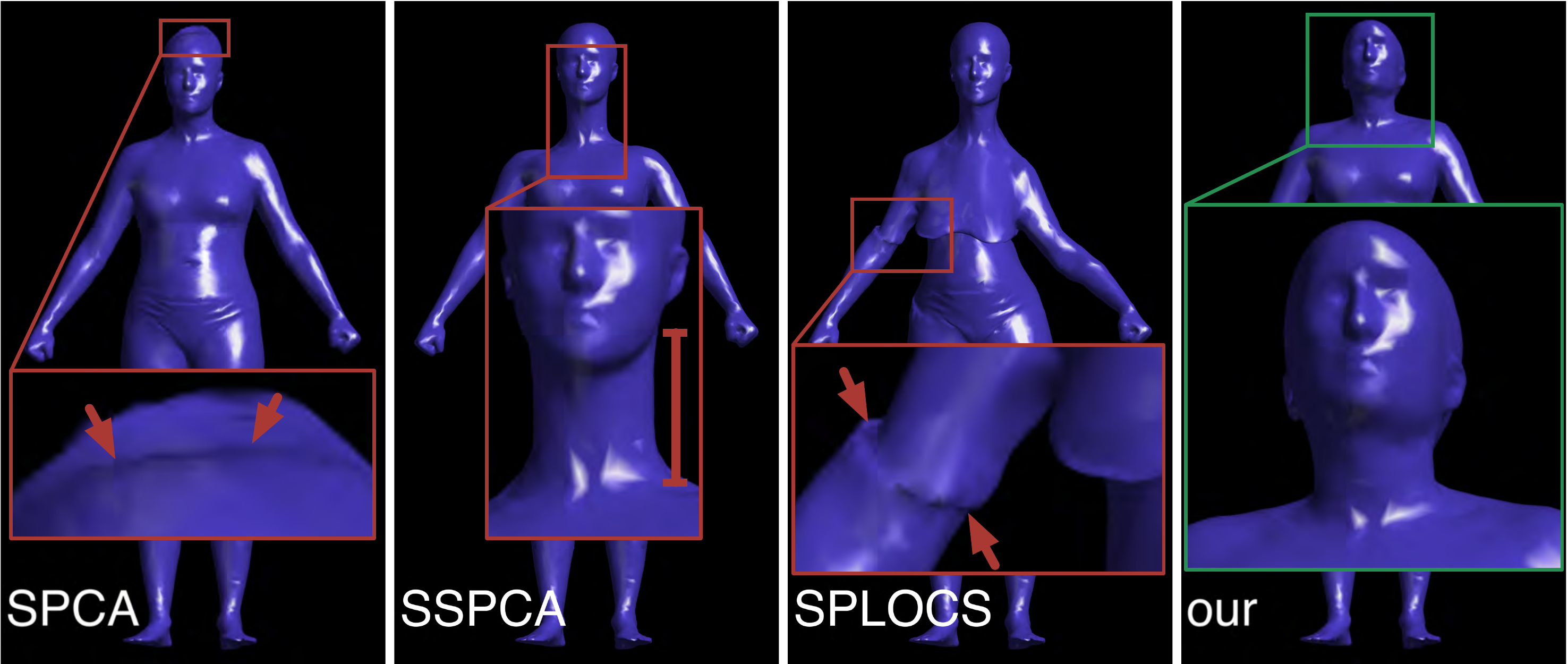}}
      \vspace{-2mm}
     \caption{Shapes ${\bar{\vecx} {-} 1.5 \matPhi_m}$ for SPCA ($m{=}1$), SSPCA ($m{=}3$), SPLOCS ($m{=}1$) and our ($m{=}1$) method (cf.~Fig.~\ref{bodyDefFactors}). Our method delivers the most realistic per-factor deformations.}
        \label{perfactorDefs}
\end{figure}

\section{Conclusion}
We presented a novel approach for learning a linear deformation model from training shapes, where the resulting factors exhibit local support. By embedding sparsity and smoothness regularisers into a theoretically well-grounded matrix factorisation framework, we model local support regions \emph{implicitly}, and thus get rid of the initialisation of the size and location of local support regions, which so far has been necessary in existing methods. %
On the small brain shapes dataset that contains relatively simple shapes, our method improves the state of the art with respect to generalisation and sparse reconstruction. For the large body shapes dataset containing more complex shapes, quantitatively our method is on par with existing methods, whilst it delivers more realistic per-factor deformations.
Since articulated motions violate our smoothness assumption, our method cannot handle them.
However, when smooth deformations are a reasonable assumption, our method offers a higher flexibility and better interpretability compared to existing methods, whilst at the same time delivering more realistic deformations.

\ifcvprfinal
\subsection*{Acknowledgements}
We thank Yipin Yang and colleagues for making the human body shapes dataset publicly available; Benjamin D. Haeffele and Ren\'{e} Vidal for providing their code; Thomas B\"{u}hler and Daniel Cremers for helpful comments on the manuscript; Luis Salamanca for helping to improve our figures; and Michel Antunes and Djamila Aouada for pointing out relevant literature. 
The authors gratefully acknowledge the financial support by the Fonds National de la Recherche, Luxembourg (6538106, 8864515).
\fi

{\small
\bibliographystyle{ieee}
\bibliography{refs}
}

\appendix

\twocolumn[\section{Linear Shape Deformation Models with Local Support using Graph-based Structured Matrix Factorisation -- Supplementary Material}]
\subsection{The matrix $\matE$ in eq.~\eqref{matE}}

The matrix  $\matE \in \R^{3\vert \mathcal{E} \vert \times 3N}$ is defined
as 
    $\matE = \matI_3 \otimes \matE'$,
where $\matE' \in \R^{\vert \mathcal{E} \vert \times N}$ is the weighted incidence matrix of the graph $\mathcal{G} {=} (\mathcal{V}, \mathcal{E}, \vecomega)$ with elements
\begin{align}
  \matE'_{pq} = \begin{cases}
    \sqrt{\vecomega_{e_p}} & \text{if } q = i \text{ for } e_p = (i,j)\\
    -\sqrt{\vecomega_{e_p}} & \text{if } q = j \text{ for } e_p = (i,j)\\
    0 & \text{otherwise} \,.
  \end{cases}
\end{align}
For $p=1,\ldots, \vert \mathcal{E} \vert$, $e_p \in \mathcal{E}$ denotes the $p$-th edge.

\subsection{Proximal Operators}
\begin{mydef}\label{defProx}
  The \emph{proximal operator} (or \emph{proximal mapping}) $\prox_{s\theta}(y): \R^n \rightarrow \R^n$ of a lower semicontinuous function $\theta: \R^n \rightarrow \R$ scaled by $s > 0$ is defined by
\begin{align}
  \prox_{s\theta}(\vecy) = \underset{\vecx}{\arg\min}~\{  \frac{1}{2s} \|\vecx-\vecy\|_2^2 + \theta(\vecx)  \}\,.
\end{align}
\end{mydef}

\subsubsection{Proximal Operator of $\|\cdot\|_A$}

The proximal operator $\prox_{\|\cdot\|_A}(\vecy)$ of $$\|\cdot\|_A = \lambda_A \| \cdot \|_2\,,$$ cf.~eq.~\eqref{normA}, can be computed by the so-called 
\emph{block soft thresholding} \cite{Parikh:2013vb}, i.e.
\begin{align}\label{blocksoftthresh}
   \prox_{ \lambda_A \| \cdot\|_2}(\vecy) = (1-  \lambda_A/\|\vecy\|_2)_+ \vecy \,,
\end{align}
where for a vector $\vecx \in \R^p$, $(\vecx)_+$ replaces each negative element in $\vecx$ with $0$.
As such, a very efficient way for computing the proximal mapping of $\|\cdot\|_A$ is available.

\subsubsection{Proximal Operator of  $\|\cdot\|_\Phi$}
The proximal mapping of 
\begin{align}
  \|\cdot\|_\Phi = 
&\lambda_1 \| \cdot \|_1 + 
   \lambda_2 \| \cdot \|_2  \nonumber\\
   &\quad + \lambda_\infty \| \cdot \|_{1,\infty}^\mathcal{H} + 
    \lambda_{\mathcal{G}} \| \matE \cdot \|_2 \nonumber\,,
\end{align}
cf.~eq.~\eqref{normPhi}, is given by
\begin{align}
  &\prox_{\|\cdot\|_\Phi}(\vecz)  = \nonumber\\ 
  & \label{proxPhi} \quad  \underset{\vecx}{\arg\min}~\{  \frac{1}{2} \|\vecx-\vecz\|_2^2 + \lambda_1 \| \vecx \|_1   + \lambda_2 \| \vecx \|_2 \\ 
   &    \quad\qquad\qquad 
  +  \lambda_{\infty} \| \vecx \|_{1,\infty}^\mathcal{H} + 
    \lambda_{\mathcal{G}} \| \matE \vecx \|_2   \}\ \,.\nonumber 
\end{align}

Due to the non-separability caused by the linear mapping $\matE$ inside the $\ell_2$ norm, this case is more difficult compared to $\prox_{\|\cdot\|_A}(\vecy)$. However, by introducing
    \begin{align}
      &{f(\vecx) = \lambda_1 \| \vecx \|_1 +  \lambda_2 \| \vecx \|_2 + 
      \lambda_{\infty} \| \vecx \|_{1,\infty}^\mathcal{H}} %
      \intertext{and}
      & {g(\vecx) = \lambda_{\mathcal{G}} \|\vecx \|_2}\,,
\end{align}
eq.~\eqref{proxPhi} can be rewritten as
\begin{align}\label{proxPhiDFB}
    \underset{\vecx}{\arg\min}~\{  \frac{1}{2} \|\vecx-\vecz\|_2^2 +  f(\vecx) + g(\matE\vecx)\}\,.
\end{align}
In this form, \eqref{proxPhiDFB} can now be solved by a dual forward-backward splitting procedure \cite{Combettes:2010uu,Combettes:2009wd} as shown in algorithm~\ref{proxPhiCode}, which we will explain in the rest of this section. The efficient computability hinges on the efficient computation of the (individual) proximal mappings of $f$ and $g$.

\begin{algorithm}\label{proxPhiCode}
\scriptsize
\SetKwInput{Input}{Input}
\SetKwInput{Initialise}{Initialise}
\SetKwInput{Output}{Output}
\SetKwInput{Parameters}{Parameters}
\DontPrintSemicolon

 \Input{$\vecz \in \R^{3N}$}
 \Output{$\vecx = \prox_{\|\cdot\|_\Phi}(\vecz)$}
 \Parameters{$\lambda, ~\lambda_1, ~\lambda_{\infty},~\lambda_2,~\lambda_{\mathcal{G}} $}
\Initialise{$\vecv, \epsilon \leftarrow 1e^{-4}, \gamma \leftarrow 1.999, \beta = \frac{1+\epsilon}{2}$}
 \tcp{Normalise $\matE$ (homogeneity of norm)}
 $\lambda_{\mathcal{G}} \leftarrow  \lambda_{\mathcal{G}} \|\matE\|_F; \quad \matE \leftarrow \frac{\matE}{\|\matE\|_F}$\\

 \Repeat{convergence}{
 $\vecx \leftarrow \prox_{\lambda_1 \| \cdot\|_1}(\vecz - \matE^T \vecv)$~~\tcp{$\ell_1$ prox, \eqref{softthresh}}

 $\vecx \leftarrow \prox_{\lambda_{\infty} \| \cdot\|_{1,\infty}^\mathcal{H}}(\vecx)$~~\tcp{$\ell_1/\ell_\infty$, \cite{Bach:2011ty,Duchi:2008vv}}

 $\vecx \leftarrow \prox_{\lambda_{2} \| \cdot\|_{2}}(\vecx)$~~\tcp{$\ell_2$ prox, \eqref{blocksoftthresh}}

 $\vecv \leftarrow \vecv {+}  \beta\gamma (\matE \vecx {-} \prox_{ \lambda_{\mathcal{G}} / \gamma \|\cdot\|_2}(\frac{\vecv + \gamma \matE \vecx}{\gamma} ) )$ \tcp{$^\dagger$}

 }

 \caption{Dual forward-backward splitting algorithm to compute the proximal mapping of $\|\cdot\|_\Phi$.}
\end{algorithm}

Since $g$ is a (weighted) $\ell_2$ norm, its proximal mapping is given by \emph{block soft thresholding} presented in eq.~\eqref{blocksoftthresh}.

In $f$, the sum of weighted $\ell_1$ and $\ell_1/\ell_\infty$ norms is  a term that appears in the \emph{sparse group lasso} and can be computed by applying the \emph{soft thresholding} \cite{Parikh:2013vb} %
\begin{align}\label{softthresh}
  \prox_{s \| \cdot\|_1}(\vecy) = (\vecy -s)_+ - (-\vecy - s)_+
\end{align}
first, followed by \emph{group soft thresholding} \cite{Bach:2011ty}. As shown in \cite[Thm. 3]{Haeffele:2014wj}, the $\ell_2$ term can be additionally incorporated by composition, i.e.~by subsequently applying \emph{block soft thresholding} as presented in \eqref{blocksoftthresh}.

Now, to solve the \emph{group soft thresholding} for the proximal mapping of the $\ell_1/\ell_\infty$ norm, one can use the fact that for any norm $\omega$ with dual norm $\omega^*$, $\prox_{s\omega}(\vecy) = {\vecy - \proj_{\omega^* \leq s}(\vecy)}$. By $\proj_{\omega^* \leq s}(\vecy)$, we denote the projection of $\vecy$ onto the $\omega^*$ norm ball with radius $s$ \cite{Bach:2011ty}. The dual norm of the $\ell_1/\ell_\infty$ norm, eq.~\eqref{l1linf}, is the $\ell_\infty/\ell_1$ norm
\begin{align}
    \| \vecz \|_{\infty,1}^\mathcal{H} =  \max_{g \in \mathcal{H}}\| \vecz_g \|_{1} \,.
\end{align}
The orthogonal projection $\proj_{\| \cdot \|_{\infty,1}^\mathcal{H} \leq s}$ onto the $\ell_\infty/\ell_1$ ball is obtained by projecting separately each subvector $\vecz_g$ onto the $\ell_1$ ball in $\R^{\vert g \vert}$ \cite{Bach:2011ty}. This demands an efficient projection onto the $\ell_1$ norm ball. Due to the special structure of our groups in $\mathcal{H}$, i.e.~there are exactly $N$ non-overlapping groups, each of which consisting of three elements, a vectorised Matlab implementation of the method by Duchi et al.\ \cite{Duchi:2008vv} can be employed.

\begin{mydef} (Convex Conjugate)\\
Let $\theta^\star(\vecy) = \sup_{\vecx}(\vecy^T\vecx - \theta(\vecx))$ be the \emph{convex conjugate} of $\theta$.  
\end{mydef}

The update of the dual variable $\vecv$ in algorithm~\ref{proxPhiCode} (see the line marked with $^\dagger$) is based on the update
\begin{align}
  \vecv \leftarrow \vecv + \beta (\prox_{\gamma (\lambda_{\mathcal{G}}  \|\cdot\|_2)^\star}(\vecv + \gamma \matE \vecx) - \vecv ) \,,\label{dualupdate}
\end{align}
presented in \cite{Combettes:2009wd,Combettes:2010uu}, where $(\cdot)^\star$ denotes the convex conjugate.

Let us introduce some tools first.
\begin{lemma} \label{moreau} (Extended Moreau Decomposition) \cite{Parikh:2013vb}\\
It holds that
\begin{align}
    \forall \vecy: ~\vecy = \prox_{s\theta}(\vecy) + s\prox_{\theta^\star/s}(\vecy/s) \,.
  \end{align}  
\end{lemma}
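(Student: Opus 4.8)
The plan is to reduce the scaled identity to the classical (unscaled) Moreau decomposition $\vecy = \prox_\theta(\vecy) + \prox_{\theta^\star}(\vecy)$, which I would take as the known base case (valid for closed proper convex $\theta$, as are all the norms and penalties appearing here), and then transport the scaling $s$ through the conjugation. First I would record the conjugate-scaling rule: for $s>0$, $(s\theta)^\star(\vecu) = s\,\theta^\star(\vecu/s)$, which follows at once from the definition of the convex conjugate by factoring $s$ out of the supremum. Applying the base-case identity to $s\theta$ in place of $\theta$ then gives $\vecy = \prox_{s\theta}(\vecy) + \prox_{(s\theta)^\star}(\vecy)$, so the whole task collapses to rewriting the second summand as $s\,\prox_{\theta^\star/s}(\vecy/s)$.

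The key step is therefore to simplify $\prox_{(s\theta)^\star}(\vecy)$. By the conjugate-scaling rule this is the minimiser of $\tfrac12\|\vecx - \vecy\|_2^2 + s\,\theta^\star(\vecx/s)$. I would substitute $\vecx = s\vecu$, so that the objective, after dividing by $s^2$ (which does not change the argmin for $s>0$), becomes $\tfrac12\|\vecu - \vecy/s\|_2^2 + \tfrac1s\theta^\star(\vecu)$; its minimiser is by definition $\prox_{\theta^\star/s}(\vecy/s)$. Undoing the substitution yields $\prox_{(s\theta)^\star}(\vecy) = s\,\prox_{\theta^\star/s}(\vecy/s)$, and substituting back produces the claimed decomposition.

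As a self-contained alternative that avoids quoting the unscaled identity, I could argue directly from first-order optimality. Writing $\vecu = \prox_{s\theta}(\vecy)$, stationarity of its defining objective reads $\tfrac{1}{s}(\vecy - \vecu) \in \partial\theta(\vecu)$. Setting $\vecq = (\vecy-\vecu)/s$ and invoking the conjugate subgradient correspondence $\vecp \in \partial\theta(\vecu) \Leftrightarrow \vecu \in \partial\theta^\star(\vecp)$ turns this into $\vecy - s\vecq \in \partial\theta^\star(\vecq)$, which is precisely the optimality condition characterising $\vecq = \prox_{\theta^\star/s}(\vecy/s)$. Uniqueness of the proximal point (strict convexity of the defining objective) then gives $\vecy - \vecu = s\vecq = s\,\prox_{\theta^\star/s}(\vecy/s)$, i.e.\ the stated identity.

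I expect the main obstacle to be notational rather than conceptual: keeping the two scaling conventions straight, namely the $s$ hidden in $\prox_{s\theta}$ versus the division by $s$ that appears both in the argument $\vecy/s$ and in the conjugate $\theta^\star/s$, and applying the change of variables consistently. The only substantive ingredient is the equivalence between a subgradient of $\theta$ and one of its conjugate, which I would simply invoke as a standard fact for closed proper convex functions.
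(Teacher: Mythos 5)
Your proof is correct, but there is nothing in the paper to compare it against step by step: the paper does not prove this lemma at all — it is quoted directly from Parikh and Boyd \cite{Parikh:2013vb}, and the only proof supplied in this part of the supplementary material is for Corollary \ref{moreauCorol}, obtained by applying the (unproved) Lemma \ref{moreau} and the biconjugation Lemma \ref{conjconj} to $\theta' = \theta^\star$. So your argument fills in a step the paper delegates to a citation. Both of your routes are sound. The first — apply the classical decomposition $\vecy = \prox_{\theta}(\vecy) + \prox_{\theta^\star}(\vecy)$ to the function $s\theta$, use the scaling rule $(s\theta)^\star(\vecu) = s\,\theta^\star(\vecu/s)$, and substitute $\vecx = s\vecu$ inside the prox to get $\prox_{(s\theta)^\star}(\vecy) = s\,\prox_{\theta^\star/s}(\vecy/s)$ — is the standard textbook derivation, and it is consistent with the paper's Definition \ref{defProx}, under which $\prox_{s\theta}$ coincides with the unit-parameter prox of the scaled function $s\theta$, so no convention clash arises. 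The second, subgradient route is more self-contained: characterising $\vecu = \prox_{s\theta}(\vecy)$ by $\frac{1}{s}(\vecy-\vecu) \in \partial\theta(\vecu)$, passing via the conjugate correspondence to $\vecu \in \partial\theta^\star(\vecq)$ for $\vecq = (\vecy-\vecu)/s$, and matching this against the condition $\vecy - s\vecq \in \partial\theta^\star(\vecq)$, which together with strict convexity of the prox objective uniquely characterises $\prox_{\theta^\star/s}(\vecy/s)$; this buys you the unscaled and scaled identities in one pass without quoting any prior decomposition. The one hypothesis you need — and correctly flag — is that $\theta$ be closed, proper and convex, which the lemma's statement leaves implicit (Definition \ref{defProx} asks only for lower semicontinuity); this is harmless here, since the paper invokes the lemma only for $\theta = \lambda_{\mathcal{G}}\|\cdot\|_2$, a norm, hence closed and convex.
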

\begin{lemma} (Conjugate of conjugate) \label{conjconj} \cite{Boyd:2009kl}\\
  For closed convex $\theta$, it holds that $\theta^{\star\star} = \theta$.
\end{lemma}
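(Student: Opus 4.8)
The plan is to prove the two inequalities $\theta^{\star\star} \le \theta$ and $\theta^{\star\star} \ge \theta$ separately. The first holds for an arbitrary function and follows immediately from the definition of the conjugate, while the second is precisely where closedness and convexity are needed.

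First I would establish \emph{Fenchel's inequality}. From the definition $\theta^\star(\vecy) = \sup_{\vecx}(\vecy^T\vecx - \theta(\vecx))$ one reads off $\theta^\star(\vecy) \ge \vecy^T\vecx - \theta(\vecx)$ for every pair $\vecx,\vecy$, i.e.\ $\theta(\vecx) + \theta^\star(\vecy) \ge \vecy^T\vecx$. Rearranging gives $\vecy^T\vecx - \theta^\star(\vecy) \le \theta(\vecx)$, and taking the supremum over $\vecy$ yields $\theta^{\star\star}(\vecx) \le \theta(\vecx)$. This direction uses no hypotheses on $\theta$ at all.

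For the reverse inequality I would argue geometrically through the epigraph $\mathrm{epi}(\theta) = \{(\vecx,t) : t \ge \theta(\vecx)\}$, which is a \emph{closed convex} set exactly because $\theta$ is closed (lower semicontinuous) and convex. Suppose, for contradiction, that $\theta^{\star\star}(\vecx_0) < \theta(\vecx_0)$ for some $\vecx_0$, and choose $\mu$ with $\theta^{\star\star}(\vecx_0) < \mu < \theta(\vecx_0)$, so that $(\vecx_0,\mu) \notin \mathrm{epi}(\theta)$. By the strict separating hyperplane theorem there is a nonzero functional, written as a pair $(\veca,b)$, strictly separating $(\vecx_0,\mu)$ from the epigraph. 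Since $t$ ranges up to $+\infty$ on $\mathrm{epi}(\theta)$, the scalar $b$ must be nonnegative. In the regular case $b>0$ one divides through by $b$ and evaluates at $t=\theta(\vecx)$ to obtain an affine minorant $\ell(\vecx) = \vecy^T\vecx - \alpha \le \theta(\vecx)$; this forces $\theta^\star(\vecy) \le \alpha$, hence $\theta^{\star\star}(\vecx_0) \ge \vecy^T\vecx_0 - \alpha = \ell(\vecx_0)$, and strict separation gives $\ell(\vecx_0) > \mu$. Thus $\theta^{\star\star}(\vecx_0) > \mu > \theta^{\star\star}(\vecx_0)$, a contradiction.

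The main obstacle is the degenerate case $b=0$, a ``vertical'' separating hyperplane, which arises when $\vecx_0$ lies outside or on the boundary of the effective domain of $\theta$; here no affine minorant can be read off directly. The standard remedy is to first secure \emph{some} affine minorant of $\theta$ (which exists whenever $\theta$ is proper, closed and convex) and then add a large multiple of the vertical functional to tilt the separating hyperplane into a non-vertical one while preserving strict separation at $(\vecx_0,\mu)$, reducing to the case above. Alternatively one may simply invoke that $\theta^{\star\star}$ is the largest closed convex function dominated by $\theta$, so that closedness and convexity of $\theta$ give $\theta^{\star\star}=\theta$ at once; I prefer the separation route because it makes the role of the closedness hypothesis explicit.
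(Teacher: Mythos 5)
Your proof is correct, but note that the paper itself never proves this lemma: it is stated with a citation to Boyd and Vandenberghe, since it is the classical Fenchel--Moreau biconjugation theorem, and it is used only as an ingredient in the proof of Corollary \ref{moreauCorol}. What you have written is essentially the standard textbook proof of that cited result: the easy inequality $\theta^{\star\star}\le\theta$ from Fenchel's inequality (valid for arbitrary $\theta$), and the hard inequality by strictly separating a point $(\vecx_0,\mu)$ with $\theta^{\star\star}(\vecx_0)<\mu<\theta(\vecx_0)$ from the closed convex epigraph, treating the non-vertical case $b>0$ directly and repairing the vertical case $b=0$ by tilting with an existing affine minorant; both cases are handled correctly. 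The one hypothesis you should surface explicitly is \emph{properness}: as stated, $\theta^{\star\star}=\theta$ can fail for improper closed convex functions (e.g.\ $\theta$ equal to $-\infty$ on $[0,1]$ and $+\infty$ elsewhere has $\theta^\star\equiv+\infty$ and $\theta^{\star\star}\equiv-\infty$), and your $b=0$ repair itself invokes properness when asserting that some affine minorant of $\theta$ exists. This is harmless in context, because the lemma is only ever applied to $\theta=\lambda_{\mathcal{G}}\|\cdot\|_2$, a finite-valued norm, hence proper; but a careful statement would read ``proper, closed and convex.'' Compared with the paper's citation, your argument buys self-containedness and makes explicit where closedness enters (closedness of the epigraph is what licenses the separation step), at the cost of importing the separating-hyperplane machinery; the citation is the pragmatic choice for a paper of this kind.
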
 
\begin{corollary}\label{moreauCorol}
  For convex and closed $\theta$, it holds that
\begin{align}
    \forall \vecy: ~\vecy = \prox_{s\theta^\star}(\vecy) + s\prox_{\theta/s}(\vecy/s) \,.
  \end{align} 
\end{corollary}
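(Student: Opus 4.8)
The plan is to obtain this corollary as a direct instance of the Extended Moreau Decomposition (Lemma~\ref{moreau}), applied not to $\theta$ itself but to its convex conjugate $\theta^\star$. Concretely, I would substitute $\theta^\star$ in place of $\theta$ everywhere in the statement of Lemma~\ref{moreau}. This yields
\begin{align}
  \forall \vecy: ~\vecy = \prox_{s\theta^\star}(\vecy) + s\prox_{(\theta^\star)^\star/s}(\vecy/s)\,,
\end{align}
which already matches the left-hand structure of the claim; the only remaining task is to simplify the double conjugate appearing in the second proximal operator.

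The simplification step invokes Lemma~\ref{conjconj}: since $\theta$ is assumed closed and convex, we have $\theta^{\star\star} = \theta$, i.e.~$(\theta^\star)^\star = \theta$. Replacing $(\theta^\star)^\star$ by $\theta$ in the displayed identity immediately produces
\begin{align}
  \forall \vecy: ~\vecy = \prox_{s\theta^\star}(\vecy) + s\prox_{\theta/s}(\vecy/s)\,,
\end{align}
which is exactly the assertion of the corollary.

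The one point that requires care — and the only place where the argument could slip — is verifying that the hypotheses of Lemma~\ref{moreau} are genuinely met when we feed it $\theta^\star$ rather than $\theta$. Lemma~\ref{moreau} is stated for a function whose proximal operator is well defined (implicitly a closed, convex, proper function). I would therefore note that the convex conjugate of any function is automatically closed and convex, being a pointwise supremum of the affine functions $\vecy \mapsto \vecy^T\vecx - \theta(\vecx)$; hence $\theta^\star$ satisfies the standing assumptions and Lemma~\ref{moreau} applies to it without further conditions. With this observation the two substitutions above are fully justified and the proof is complete. I do not expect any serious obstacle here: the content is entirely a relabelling of Lemma~\ref{moreau} combined with biconjugation, so the main (minor) subtlety is simply confirming the regularity of $\theta^\star$ so that the decomposition is legitimately invoked.
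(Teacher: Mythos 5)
Your proposal is correct and follows exactly the paper's own argument: apply Lemma~\ref{moreau} with $\theta^\star$ in place of $\theta$, then use Lemma~\ref{conjconj} (biconjugation for closed convex functions) to replace $(\theta^\star)^\star$ by $\theta$. Your additional remark that $\theta^\star$ is automatically closed and convex (as a pointwise supremum of affine functions) is a sound justification for invoking Lemma~\ref{moreau}, which the paper leaves implicit.
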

\begin{proof}
  Define $\theta' = \theta^\star$ and apply Lemma \ref{moreau} and Lemma \ref{conjconj} with $\theta'$ in place of $\theta$. 
\end{proof}

Since for our choice of $\theta = \lambda_{\mathcal{G}}  \|\cdot\|_2$, $\theta$ is a closed convex function, by  Corollary \ref{moreauCorol}, one can write
\begin{align}
  \prox_{\gamma (\lambda_{\mathcal{G}}  \|\cdot\|_2)^\star}(\vecy) = 
  \vecy - \gamma \prox_{ (\lambda_{\mathcal{G}} / \gamma) \|\cdot\|_2}(\vecy/\gamma) \,.
\end{align}
With that, the right-hand side of eq.~\eqref{dualupdate} can be written as
\begin{align}
&  \vecv + \beta (\vecv + \gamma \matE \vecx - \gamma\prox_{\lambda_{\mathcal{G}}  / \gamma  \|\cdot\|_2}(\frac{\vecv + \gamma \matE \vecx}{\gamma} ) - \vecv ) = \nonumber\\
% =& \vecv +  \beta (\gamma \matE \vecx - \gamma\prox_{ \lambda_{\mathcal{G}}  / \gamma \|\cdot\|_2}(\frac{\vecv + \gamma \matE \vecx}{\gamma} ) )  \nonumber\\
& \vecv +  \beta\gamma (\matE \vecx - \prox_{\lambda_{\mathcal{G}}  / \gamma \|\cdot\|_2}(\frac{\vecv + \gamma \matE \vecx}{\gamma} ) ) \,.
\end{align}

\subsection{Computational Complexity}
In practice it holds that $N \gg \max(M,K)$.
The gradient steps for the updates of $\matPhi$ and $\matA$ have both time complexity 
$\mathcal{O}(N \max(M^2,K^2))$, 
the proximal step for $\matA$ has complexity $\mathcal{O}(MK)$, and the proximal step for $\matPhi$ has complexity $\mathcal{O}(MN\vert\mathcal{E}\vert  n_\text{it})$, where $n_\text{it}$ is the number of iterations for the dual forward-backward splitting procedure (we used a maximum of $n_\text{it}{=}20$). Thus, the total time complexity for one iteration in algorithm~\ref{bcdCode} is $\mathcal{O}(N \max(M^2,K^2) {+} MN \vert\mathcal{E}\vert  n_\text{it})$. Since in practice the number of vertices $N$ and the number of edges in the graph $\vert\mathcal{E}\vert$ are larger than $M$ and $K$, the runtime complexity is dominated by the proximal step for $\matPhi$, which in our experiments takes around $60\%$ of the time for the brain shapes dataset ($N{=}1792,~M{=}96,~K{=}17~,\vert\mathcal{E}\vert{=}19182$), and uses more than $90\%$ of the time for the human body shapes dataset ($N{=}12500,~M{=}48,~K{=}1531,\vert\mathcal{E}\vert{=}99894$).

\subsection{Factor Splitting}
\noindent The \emph{factor splitting} procedure is presented in algorithm~\ref{factorSplitting}.

\begin{algorithm}\label{factorSplitting}
\scriptsize
\SetKwInput{Input}{Input}
\SetKwInput{Initialise}{Initialise}
\SetKwInput{Output}{Output}
\SetKwInput{Parameters}{Parameters}
\DontPrintSemicolon

 \Input{factor $\vecphi \in \R^{N \times 3} \text{ where } \VEC(\vecphi) =  \matPhi_m$, graph $\mathcal{G} = (\mathcal{V},\mathcal{E})$}
 \Output{$J$ factors $\matPhi' \in \R^{3N \times J}$ with local support}
\Initialise{$\mathcal{E'} = \emptyset$, $\matPhi' = [\,]$}
 \tcp{build activation graph $\mathcal{G'}$}
 \ForEach{$(i,j) = e \in \mathcal{E}$}{
    \If( \tcp*[h]{vertex $i$ or $j$ is active}){$\vecphi_{i,:} \neq \zerovec \vee \vecphi_{j,:} \neq \zerovec $ }{
      $\mathcal{E'} = \mathcal{E'} \cup \{e\}$ \tcp{add edge}
    }
 }
 $\mathcal{G'} = (\mathcal{V}, \mathcal{E'})$\\
 \BlankLine
 \tcp{find connected components \cite{Tarjan:1972hk}}
 $\mathcal{C}$ = connectedComponents($\mathcal{G'}$) \tcp*{$\mathcal{C} \subset 2^{\mathcal{V}}$}
 \BlankLine
 \ForEach( \tcp*[h]{add new factor for $c \subset \mathcal{V}$}){$c \in \mathcal{C}$}{
    $\vecphi' = \zerovec_{N \times 3}$\\
    $\vecphi'_{c,:} = \vecphi_{c,:}$\\
    $\matPhi' = [\matPhi',~ \VEC(\vecphi')]$
 }
 \caption{Factor splitting procedure.}
\end{algorithm}

\subsection{Convergence Plots}
For $100$ different initialisations (cf.~section \ref{expres}), the convergence plots are shown for both datasets in Fig.~\ref{convplot}. It can be seen that the main convergence occurs after around $10$ iterations. Moreover, for all $100$ initialisations the objective value are near-congruent.

\begin{figure}[h]
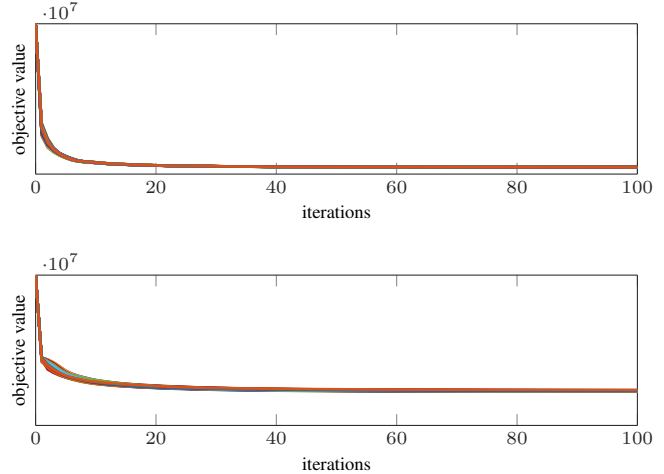
%
     \centerline{
        \subfigure{%
    % \tikzsetnextfilename{#1}%
    \input{brain_convergenceplot.tikz}%
 }
     } 
     \centerline{
        \subfigure{%
    % \tikzsetnextfilename{#1}%
    \input{SPRING_convergenceplot.tikz}%
 }
     } 
     % \vspace{-5mm}
    \caption{Convergence plots for the brain shapes dataset (top) and the human body shapes dataset (bottom). The iterations are shown on the horizontal axis and the (relative) objective value in eq.~\eqref{optProb} is shown on the vertical axis. Note that in each subfigure all $100$ lines are near-congruent.} 
    \label{convplot}
\end{figure}

\subsection{Parameter Random Sampling}

The methods kPCA, SPCA, SSPCA, SPLOCS and our method require various parameters to be set. In order to find a good parametrisation we conducted random sampling over the parameter space, where we determined reasonable ranges for the parameters experimentally. 

In Table \ref{tab:paramSample} the distributions and default values of the parameters for each method are given. $\mathcal{U}(a,b)$ is the uniform distribution with the open interval $(a,b)$ as support, $10^{\mathcal{U}(\cdot,\cdot)}$ is a distribution of the random variable $y = 10^x$, where $x \sim \mathcal{U}(\cdot,\cdot)$,
and $\bar{d}_{\max}$ is the largest distance between all pairs of vertices of the mean shape $\bar{X}$. %
\begin{table}[htbp]
\setlength{\tabcolsep}{1mm}
  \centering
  \caption{Assumed distributions of the parameters interpreted as random variables.}
    \begin{tabular}{lp{6.9cm}}
    \toprule
        \textbf{Method} & \textbf{Parameter Distribution / Default Value}\\
    \midrule
    kPCA & $\beta \sim (\mathcal{U}(1,10))^{-1}$ \\

    SPCA  & ${\lambda \sim \frac{1}{3N}10^{\mathcal{U}(-4,-3)}}$ (see eq.~(2) in \cite{Jenatton:2009tq}) \\ 

    SSPCA & ${\lambda \sim \frac{1}{N}10^{\mathcal{U}(-4,-3)}}$ (see eq.~(2) in \cite{Jenatton:2009tq}) \\ 

    SPLOCS & {${\lambda \sim (3K){\cdot} 10^{\mathcal{U}(-4,-3)}};$
    ${d_{\min} \sim c {-} w};$
    ${d_{\max} \sim c {+} w}$}\\
    & (see eq.~(6) in \cite{Neumann:2013gb}), where ${c \sim \mathcal{U}(0.1,\bar{d}_{\max} - 0.1)}\qquad$ and
    ${w \sim \mathcal{U}(0, \min(\vert c - 0.1\vert,\vert \bar{d}_{\max} -0.1 - c\vert))}$ \\ 

    our & ${\beta \sim (\mathcal{U}(1,10))^{-1}};$
    {${ \lambda = 64 {\cdot} \frac{3NK }{M}};$}
    {${ \lambda_{\mathcal{G}} = \frac{1}{\sqrt{3 \vert \mathcal{E}\vert}}};$ 
    ${ \lambda_1 = \frac{1}{\sqrt{3N}}};$}
    ${ \lambda_2 = \frac{1}{\sqrt{3N}}};$
    ${ \lambda_\infty   = 2 {\cdot} \frac{1}{\sqrt{N}}};$
    ${ \lambda_A = 10^{-4} {\cdot} \frac{1}{ \sqrt{K}}}$\\

    \bottomrule
    \end{tabular}%
    \vspace{-3mm}
  \label{tab:paramSample}%
\end{table}%

For each of the $n_r$ random samples (we set $n_r = 500$ for the brain shapes dataset, and $n_r = 50$ for the human body shapes, due to the large size of the dataset) we compute the $8$ scores described in section \ref{quanmeas} and store them in the score matrix $\matS \in \R^{n_r \times 8}$. 
After (linearly) mapping the elements of each $n_r$-dimensional column vector in $\matS$ onto the interval $[0,1]$, the best parametrisation is determined by finding the index of the smallest value of the vector $\matS \onevec_{8} \in \R^{n_r}$. %

For the lambda parameters of the proposed method we identified default values that we used for the evaluation of both datasets. Moreover, a normalisation of the parameters is conducted for all methods. For kPCA, SPCA, SSPCA and our method the random sampling was conducted only for a single parameter, whereas for the SPLOCS method three parameters had to be set, two of them related to the size of the local support region.

\end{document}